\newtheorem{definition}{Definition}
\newtheorem{proposition}{Proposition}
\newtheorem{proof}{Proof}
\newcommand{\equationsize}{\footnotesize}
\title{Water Disaggregation via Shape Features based Bayesian Discriminative Sparse Coding}
\author{
  Bingsheng~Wang*\thanks{These two authors contributed equally} \\
  Google Corporation\\
  Williamsburg, CA 22315 \\
  \texttt{claren89@vt.edu} \\
   	 \And
 Xuchao~Zhang* \\
  Department of Computer Science\\
  Virginia Tech\\
  Falls Church, VA 22043 \\
  \texttt{xuczhang@vt.edu} \\
     \And
 Chang-Tien~Lu \\
  Department of Computer Science\\
  Virginia Tech\\
  Falls Church, VA 22043 \\
  \texttt{ctlu@vt.edu} \\
     \And
 Feng~Chen \\
  Department of Computer Science\\
  University at Albany-SUNY\\
  Albany, New York \\
  \texttt{xuczhang@vt.edu} \\
}
\begin{document}
\maketitle

%
%
%
%

\begin{abstract}
As the issue of freshwater shortage is increasing daily, it is critical to take effective measures for water conservation. According to previous studies, device level consumption could lead to significant freshwater conservation. Existing water disaggregation methods focus on learning the signatures for appliances; however, they are lack of the mechanism to accurately discriminate parallel appliances' consumption. In this paper, we propose a Bayesian Discriminative Sparse Coding model using Laplace Prior (BDSC-LP) to extensively enhance the disaggregation performance. To derive discriminative basis functions, shape features are presented to describe the low-sampling-rate water consumption patterns. A Gibbs sampling based inference method is designed to extend the discriminative capability of the disaggregation dictionaries. Extensive experiments were performed to validate the effectiveness of the proposed model using both real-world and synthetic datasets.
\end{abstract}

%
%
%

%
%



\section{Introduction}\label{sec:introduction}
The scarcity of potable water is one of the most critical smart-city challenges \cite{huang2016crowdsourcing, zhang2017spatiotemporal, zhang2017traces} facing the world. 
The statistics shown in Nature 2010~\cite{Gilbert10} states that about 80\% of the world's population lives in short of potable water. 
Furthermore, according to the California Department of Water Resources, without more water supplies by 2020, the
region will suffer a deficiency nearly as much as the total amount consumed today~\cite{BENCH-DELVE11}. At the global level, the
existing freshwater is only enough to extend out as much as 60 or 70 years~\cite{Gleick01}. 
Urban water consumption contributes to 50\%$\sim$80\% of public water supply systems and 26\% of whole usage in the US~\cite{Vickers01}.
Several measures have been taken to mitigate the problem of water shortage; 
water conservation is one concrete and fundamental task, where data mining and machine learning can play an important role.

Previous studies have shown that device-level water usage information is crucial for establishing effective conservation strategies~\cite{Fischer08,froehlich2010design,Froehlich12}. 
Water disaggregation refers to the process of separating aggregated smart meter readings into the consumption of
its component appliances, such as Toilet, Shower and Washer.
This paper specifically considers the task of disaggregating residential water consumption targeting for conservation. 
Recently, water disaggregation has become an important topic to explore solutions for water conservation. 
Most previous studies focus on sensing the open/close pressure waves of devices to identify the signatures for separation. 
These methods are capable of achieving more than 90\% in accuracy measure. However, they depend on high sample rate (typically 1 kHz) data sources to analyze individual device features. 
The widely deployed smart meters only produce low sample rate (as low as 1/900 Hz) readings to ensure reliable data transmission, and Figure~\ref{fig:example_consumption} shows an example of real data observation sequences.
It is critical to design an effective algorithm to disaggregate low-sampling-rate smart meter readings.
\begin{figure}[htb!]
\centering
\includegraphics[width=1\textwidth]{./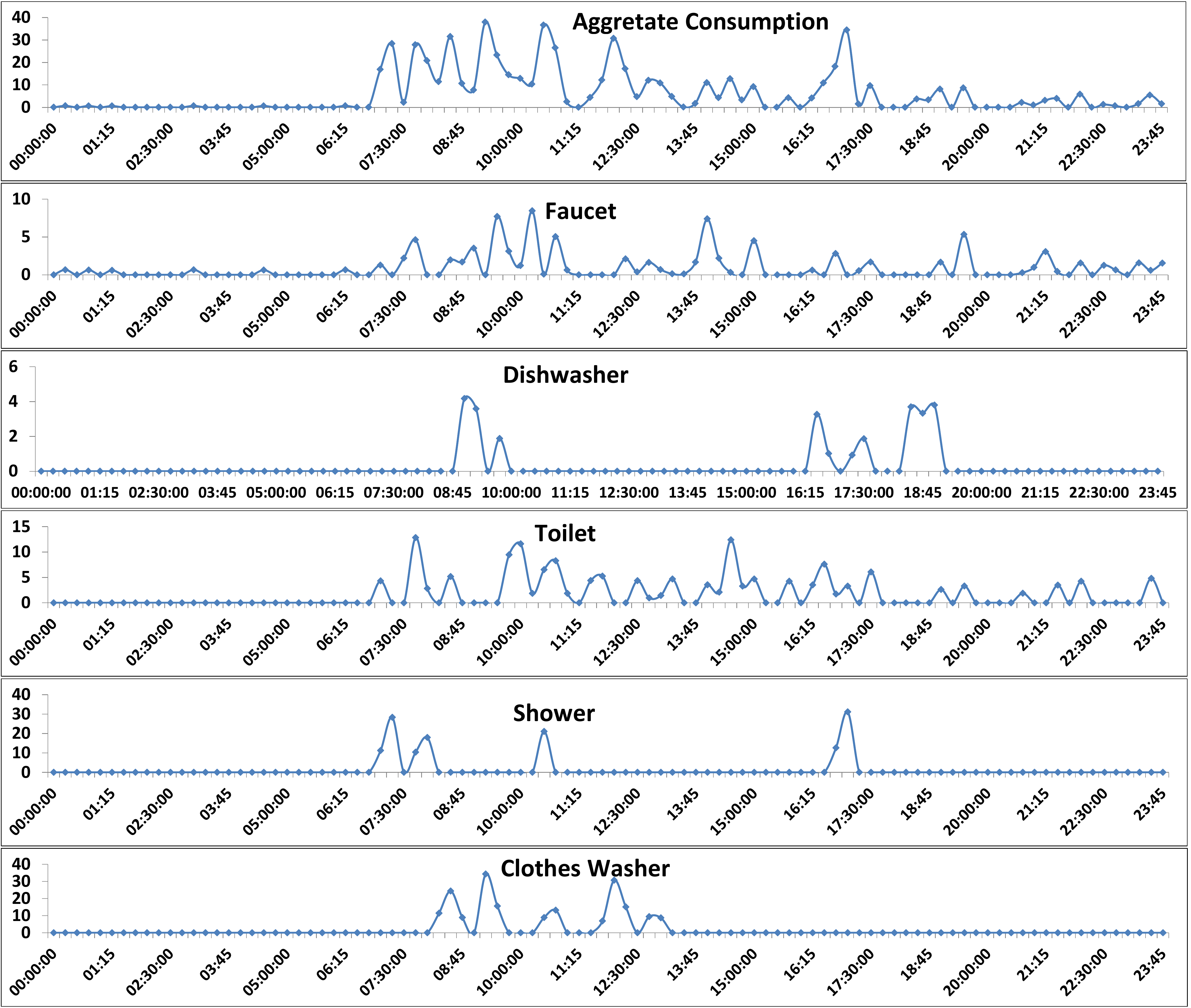}
\vspace{0.0pc}
\caption{Example real data observation sequences in one day}
\vspace{-0.0pc}
\label{fig:example_consumption}
\end{figure}
The critical challenge is to cope with the issues caused by low-sampling-rate data sources. 
Most existing deployed smart meters report one reading per 15 minutes, and it is impossible to identify open/close signatures for appliances like what have done in~\cite{Larson12,Froehlich:11}. 
This requires us to find appliances' signatures from the low-sampling-rate data, and then perform disaggregation.
A Bayesian sparse coding model is proposed to learn the dictionaries for discriminating devices' consumption.
After implementing a formal process to abstract the shape features for each device, the basis functions are initialized using the invariant features, and smoothed to adapt to the variances of label training data.
The sparseness is guaranteed by the Laplace prior distribution over the coefficients.
With the fixture level consumption data, a Bayesian sparse coding model can be learned for each appliance.
By combining the trained dictionaries, the objective function of sparse coding can be further minimized with respect to the aggregated data and this can help to enhance the discriminative capability of the disaggregation dictionaries.
A Gibbs sampling based method is applied to perform inference on the proposed model and the predictive density is evaluated.
In summary, the contributions of this paper are as follows:
\begin{itemize} 
	\item \textbf{Analyses and formalizations of shape features for smart meter readings:}
	Rigorous analyses and definitions of shape features are presented by exploring the prior knowledge with respect to individual device consumption patterns. 
	We use the results of the analyses to guide the learning of basis functions, and show how it can help improve the disaggregation performance.
	
	\item \textbf{Design of a Bayesian discriminative sparse coding model:}
	A Bayesian sparse coding with Laplace prior is learned for each device, and then we combine these trained models together to achieve the disaggregation dictionaries.
	The discriminative capability of the disaggregation dictionaries is improved by adapting the bases to the aggregated data.
	
	\item \textbf{Development of effective inference methods:}
	The Gamma prior over the noise's precision and the Laplace prior over the coefficients make the models to be hard for learning.
	To solve this problem, a Gibbs sampling based algorithm is presented for the inference over the Bayesian sparse coding models.

	\item \textbf{Extensive experiments for illustrating the effectiveness:}
	The effectiveness of the proposed model was validated  with extensive experiments based on both real and synthetic datasets, and the experimental results showed that our model outperformed the baselines.
\end{itemize}

The rest of this paper is organized as follows:
Section~\ref{sec:bk_related} introduces the background and the surveys related work on water disaggregation.
Section~\ref{sec:BDSC_SF} describes the shape features based Bayesian discriminative sparse coding model.
Section~\ref{sec:gs_infer} provides the algorithms for inference and parameters estimation.
The effectiveness of the proposed model is illustrated with extensive experiments in Section~\ref{sec:experiment}.
Finally, Section~\ref{sec:conclusion} presents our conclusions.

\section{Background and Related Work} \label{sec:bk_related}
\subsection{Notations and Concepts}\label{subsec:notation_concepts}
Suppose there is a total of $D$ devices, such as Toilet and Shower.
For each device $d=1, 2, \cdots, D$, $\mathbf{Y}^{(d)}\in\mathbb{R}^{N\times P}$ is used to denote its consumption matrix,
where $N$ is the number of intervals in one day and $P$ is the number of days.
The $p^{\text{th}}$ day's consumption of device $d$ is denoted as $\mathbf{y}^{(d)}_{\cdot,p}$.
The water usage of device $d$ for interval $i$ of day $p$ is denoted as $y^{(d)}_{i, p}$.
$\bar{\mathbf{Y}}$ is used to indicate the aggregated water consumption over all devices: $\mathbf{\bar{Y}} = \sum_{d=1}^D \mathbf{Y}^{(d)}$. 
The $p^{\text{th}}$ column of $\bar{\mathbf{Y}}$ holds the aggregated consumption of the $p^{\text{th}}$ day for a given household. 
The $i^{\text{th}}$ element of $\bar{\mathbf{y}}_{\cdot, p}$, denoted as $\bar{y}_{i,p} = \sum_{d=1}^D y_{i, p}^{(d)}$, is the aggregated consumption at interval $i$ in day $p$.
During the training course, we have the individual device's consumption data, $\mathbf{Y}^{(1)}, \mathbf{Y}^{(2)}, \cdots, \mathbf{Y}^{(D)}$, while during the testing course, only the aggregated data $\mathbf{\bar{Y}}$ is available, with the goal being to separate it into $\mathbf{\hat{Y}}^{(1)}, \mathbf{\hat{Y}}^{(2)}, \cdots, \mathbf{\hat{Y}}^{(D)}$.

\subsection{Disaggregation via Discriminative Sparse Coding}
The basic idea of discriminative sparse coding~\cite{Kolter10} is to employ the regularized disaggregation error as the objective function in place of using the default non-negative sparse coding objective:
\begin{equation}\label{eq:dsc_rde}
E_{reg} = \sum_{d=1}^D \frac{1}{2} \left\| \mathbf{Y}^{(d)} - \mathbf{H}^{(d)}\hat{\mathbf{A}}^{(d)} \right\|_F^2 + \lambda \sum_{d, q, r}(\hat{\mathbf{X}}^{(d)})_{qr}
\end{equation}
where $\hat{\mathbf{X}}^{(1:D)}$ is achieved by optimizing the traditional sparse coding model. 
Minimizing $E_{reg}$ is likely to achieve much better basis functions than optimizing the conventional sparse coding model
for separating the aggregated signal.
The best possible value of $\hat{\mathbf{X}}^{(d)}$ can be achieved by
\begin{equation}\label{eq:dsc_bestX}
\tilde{\mathbf{X}}^{(d)} = \underset{ \mathbf{X}^{(d)}\geq 0 }{\operatorname{argmin}} \; \frac{1}{2}\left\| \mathbf{Y}^{(d)} - \mathbf{H}^{(d)}\mathbf{A}^{(d)} \right\|_F^2 + \lambda \sum_{q, r}(\mathbf{X}^{(d)})_{qr}
\end{equation}
It is obvious that the coefficients achieved by optimizing Eq.~\eqref{eq:dsc_bestX} are the same as the activations obtained after iteratively optimizing the non-negative sparse coding objective. 
As a result, the discriminative dictionary $\tilde{\mathbf{H}}^{(1:D)}$ can be learned by minimizing~\eqref{eq:dsc_rde} while making the activations as close to $\tilde{\mathbf{X}}^{(1:D)}$ as possible.
Since the change of bases $\mathbf{H}^{(1:D)}$ for optimizing~\eqref{eq:dsc_rde} would also cause the resulting optimal coefficients to be changed,
the learned discriminative basis functions (i.e., $\tilde{\mathbf{H}}^{(1:D)}$) would be different from the reconstruction bases (i.e., $\mathbf{H}^{(1:D)}$).
Formally, the discriminative dictionary can be learned by optimizing the augmented regularized disaggregation error objective:
\begin{equation} \label{eq:dsc_arde}
\begin{split}
&\tilde{E}_{reg}\left( \mathbf{Y}^{(1:D)}, \mathbf{H}^{(1:D)}, \tilde{\mathbf{H}}^{(1:D)} \right)
\equiv \sum_{d=1}^D \left( \frac{1}{2}\left\| \mathbf{Y}^{(d)} - \mathbf{H}^{(d)}\mathbf{\hat{X}}^{(d)} \right\|_F^2 + \lambda \sum_{q, r}(\mathbf{\hat{X}}^{(d)})_{qr} \right)
\\
&\qquad\qquad  \text{subject to } \hat{\mathbf{X}}^{(1:D)} = \underset{ \mathbf{X}^{(d)}\geq 0 }{\operatorname{argmin}} \; \frac{1}{2}\left\| \mathbf{Y}^{(d)} - \mathbf{\tilde{H}}^{(d)}\mathbf{X}^{(d)} \right\|_F^2 + \lambda \sum_{q, r}(\mathbf{X}^{(d)})_{qr}
\end{split}
\end{equation}
where $\mathbf{\tilde{H}}^{(1:D)}=[\mathbf{\tilde{H}}^{(1)}, \cdots, \mathbf{\tilde{H}}^{(D)}]$.
$\mathbf{H}^{(1:D)}$ are the reconstruction bases learned from the traditional sparse coding model, 
while $\mathbf{\tilde{H}}^{(1:D)}$ are the discriminative bases optimized by moving $\mathbf{\hat{X}}^{(1:D)}$ as close to $\tilde{\mathbf{X}}^{(1:D)}$ as possible.
It is important to note, however, that although the discriminative sparse coding model has been developed to optimize the bases and thus decrease the overall disaggregation error, it is lack of the mechanism to learn the shape features from the low-sampling-rate data for deriving accurate disaggregation results.

\subsection{Related Work}\label{subsec:related}
Recently, due to the fast-growing amount of data \cite{xu2016fatman, shi2017open, xu2017scaling, Gao2018}, machine learning based approaches \cite{ijcai2017-480, zhang2018distributed, wu2018neural} have been widely applied in many real-world challenges \cite{shi2017proje, zhang2017online, shi2016fact} in smart-city related areas \cite{wu2017uapd, wu2018will}.
Thanks to the increasing deployment of smart meters in many countries, water disaggregation is emerging as an interesting new research direction in urban computing \cite{huang2017scalable, zhang2016storytelling, zhang2017traces}.
Pressure-based sensors have been designed for installation on water fixtures to help identify activity and estimate the corresponding consumption for individual household devices~\cite{Froehlich09,Froehlich:11,Larson12}.
By utilizing both occupancy sensors and whole house water flow meter data,~\cite{Vijay:11} categorized the aggregated consumption at the fine-grained device level.
Although such methods are capable of achieving about 90\% accuracy, they depend on high-sampling-rate sensing data (as high as 1 Khz) to capture the characteristic open/close signatures of devices.
A HMM (Hidden Markov Model) based approach was developed in~\cite{Chen11} for separating low-sampling-rate (1/900 Hz) data, while 
\cite{Nguyen:13,Nguyen:131} proposed a hybrid combination of HMM and DTW (Dynamic Time Warping) to automate the categorisation of residential water end use events and estimate devices' consumption.
However, a HMM based structure inherently restricts its ability to infer consumption for parallel devices.
A deep sparse coding based model was presented in~\cite{Dong13} that fully utilizes the limited label data and performs disaggregation in a sequential manner,
however, the model may be sensitive to the disaggregation structure and the learning process is of high computational complexity when seeking the optimal architecture for disaggregation.

There is a lack of models designed for disaggregating low-sampling-rate water consumption. 
The existing HMM based method analyses the activities with interval based consumption; however, it has limited ability to estimate the consumption for parallel devices due to its inherent serial structure. 
Existing sparse coding based methods \cite{Goodfellow:12, huang2017you} are lack of the mechanism to capture features for better learning disaggregation dictionaries, limiting their capabilities for estimating device level consumption from aggregated data.

\section{Bayesian Discriminative Sparse Coding with Shape Features} \label{sec:BDSC_SF}
We can now formulate the shape features, and perform basis functions smoothness based on prior knowledge and exploration of the low-sampling-rate data, as described below in Section~\ref{subsec:shape}.
Section~\ref{subsec:scg} goes on to describe the sparse coding with Laplace prior to model the generative process, and the 
discriminative model is developed in Section~\ref{subsec:bddm}.

\subsection{Shape Features} \label{subsec:shape}
The domain/prior knowledge suggests that the duration and consumption trend of water fixtures (corresponding to the human activities related to water consumption) are distinct across devices. 
For example, the average duration of Toilet is around 1$\sim$3 minutes, while that of shower is typically between 3$\sim$20 minutes.
This domain knowledge can be incorporated into the sparse coding model to help it learn discriminative dictionaries.
\textbf{Span} is formalized to capture the duration feature and \textbf{First-order Relation} is defined to capture the variations of time series.
\textbf{Consumption Mapping} and \textbf{Shape Features} are then defined to extract devices' consumption dictionaries, and finally \textbf{Basis Smoothness} is introduced to fill the insufficient variances of the learned dictionaries for deriving sparse coefficients.
\begin{definition}[Span] \label{def:span}
For any device $d$, the span of $d$, denoted by $S_d$, is defined as the enumeration of all possible operating durations measured as the number of intervals.
\end{definition}
Taking Toilet as an example, there are two possible time durations, 1 or 2. 
With~\textbf{Definition~\ref{def:span}}, the span for toilet is thus: $S=\{1, 2\}$.
Since water consumption is continuous, it is generally difficult to define trend features. 
For instance, although its duration is limited to only 2 intervals, Toilet might contain infinite combinations as long as the sum is within a certain range (1$\sim$5 gallons~\cite{Roberts04}):
\begin{equation} \label{eq:example_toilet} \equationsize
\left\{ 
	\left[
	\begin{array}{c}
	0.7  \\
	0.8\end{array}
	\right],
	\left[
	\begin{array}{c}
	1.0  \\
	0.5\end{array}
	\right],
	\left[
	\begin{array}{c}
	0.5  \\
	1.0\end{array}
	\right],
	\left[
	\begin{array}{c}
	0.8  \\
	2.5\end{array}
	\right],
	\left[
	\begin{array}{c}
	3.2  \\
	1.7\end{array}
	\right],
	\left[
	\begin{array}{c}
	0.8  \\
	0.8\end{array}
	\right],
	\left[
	\begin{array}{c}
	1.0  \\
	1.0\end{array}
	\right]\cdots
\right\}
\end{equation}
where each vector shows one possible Toilet consumption value distributed over 2 intervals (for simplification, we show here the 2 of the 96 possible intervals in one day). 
The infinite number of possibilities causes the problem to be hard for learning. 
Inspecting the data shown in~\eqref{eq:example_toilet}, we could intuitively discern a pattern: numerical relationships exist (larger than, equal to, or less than) between the values in these two intervals. 
This indicates that the maximum number of consumption trends over $z$ intervals is $3^{z(z-1)/2}$ (where $z$ is any integer bigger than 0), which grows rapidly with the number of intervals. 
To reduce the complexity while capturing the variances, we therefore propose a First-order Relation to approximate the consumption characteristics.
\begin{definition} [First-order Relation]\label{def:FoR}
Given any time
series consisting of $\mathscr{T}$ real values $\mathscr{V}_1, \cdots, \mathscr{V}_{\mathscr{T}}$, where $\mathscr{V}_{\text{min}}$ and $\mathscr{V}_{\text{max}}$ respectively denote the minimum and maximum values of these $\mathscr{T}$ real values,
the First-order Relation of this time series is defined as
\begin{equation} \label{eq:def_FoR} \equationsize
\begin{split}
& \mathcal{F}(1) = 
\begin{cases}
		1  &, \; \mathscr{T} = 1 \\
		1  &, \; \mathscr{V}_1 > \mathscr{V}_2, \mathscr{T}>1\\
		1 &,  \; \mathscr{V}_1 = \mathscr{V}_2 = \mathscr{V}_{\text{max}}, \mathscr{T}>1\\
		0 &, \; \text{otherwise}
	\end{cases}
,
\mathcal{F}(t) = 
\begin{cases}
		1  &, \; \mathscr{V}_t > \mathscr{V}_{t-1}, \mathscr{T}>1 \\
		0  &, \; \mathscr{V}_t < \mathscr{V}_{t-1}, \mathscr{T}>1\\
		\mathcal{F}(t-1) &, \;  \mathscr{V}_t = \mathscr{V}_{t-1}, \mathscr{T}>1\\
	\end{cases}
\\
& \qquad\qquad \text{where}\;t=2, \cdots, \mathscr{T}.
\end{split}
\end{equation}
\end{definition}
For example, applying Eq.~\eqref{eq:def_FoR} reduces the first-order relations\footnote{For illustration purposes, we have not normalized the basis functions.} of vectors in Eq.~\eqref{eq:example_toilet} to:
\begin{equation}\label{eq:example_toilet_approx} \equationsize
\left\{ 
	\left[
	\begin{array}{c}
	0  \\
	1\end{array}
	\right],
	\left[
	\begin{array}{c}
	1  \\
	0\end{array}
	\right],
	\left[
	\begin{array}{c}
	0  \\
	1\end{array}
	\right],
	\left[
	\begin{array}{c}
	0  \\
	1\end{array}
	\right],
	\left[
	\begin{array}{c}
	1  \\
	0\end{array}
	\right],
	\left[
	\begin{array}{c}
	1  \\
	1\end{array}
	\right],
	\left[
	\begin{array}{c}
	1  \\
	1\end{array}
	\right]\cdots
\right\}
\end{equation} 
We can now define the mapping schema to convert infinite continuous consumption values into their corresponding first-order relations:
\begin{definition}[Consumption Mapping] \label{def:con_mapping}
Given any device $d$ and its corresponding span $S_d$. 
For $\forall \mathbf{y}^{(d)}_p\in\mathbf{Y}^{(d)}$, $p=1, 2, \cdots, P$, consider any possible time series combination $\mathcal{C}^{(d)}_{p,r}$ of non-zero values in $\mathbf{y}^{(d)}_p$ while holding the original time sequence, where $r\in S_d$ specifies the span of the current combination. 
Consumption mapping is the process used to apply the First-order Relation to $\mathcal{C}^{(d)}_{p,r}$ in order to achieve $\mathcal{F}_{\mathcal{C}^{(d)}_{p,r}}$.
\end{definition}
In \textbf{Definition~\ref{def:con_mapping}}, only the non-zero values need to be considered since the zero values will lead to more combinations but will not assist the reconstruction.
$r\in S_d$ indicates the length of one combination in $\mathcal{C}^{(d)}_{p,r}$.
Formally, for $\forall d=1,2, \cdots, D$, we use $\boldsymbol{\mathscr{M}}^{(d)}$ to denote the consumption mapping results of $\mathbf{Y}^{(d)}$.
\begin{definition}[Shape Features]
For $\forall d=1, 2, \cdots, D$, given $\boldsymbol{\mathscr{M}}^{(d)}$, the shape features of device $d$ denoted by $\boldsymbol{\mathcal{S}}^{(d)}$ corresponds to the set of unique elements in $\boldsymbol{\mathscr{M}}^{(d)}$.
\end{definition}
Based on the observed Toilet consumption in Eq.~\eqref{eq:example_toilet}, by removing the redundant patterns in Eq.~\eqref{eq:example_toilet_approx}, the shape features for Toilet are as follows:
\begin{equation}\label{eq:example_toilet_shape_features} \equationsize
\left\{ 
	\left[
	\begin{array}{c}
	0  \\
	1\end{array}
	\right],
	\left[
	\begin{array}{c}
	1  \\
	0\end{array}
	\right],
	\left[
	\begin{array}{c}
	1  \\
	1\end{array}
	\right]
\right\}
\end{equation}
For a specific device , the complexity is significantly reduced from infinity ($\infty$) to $\sum_{q=1}^{|S_{d}|} (2^{S_{d,q}} - 1)$, where $S_{d,q}$ is the $q$th element in $S_d$. 
Taking Toilet as an example, without considering the locations of the intervals, the total number of combinations is $(2^1-1 + 2^2 - 1) = 4$.
\begin{algorithm}[H] \small
\caption{Shape Feature Discovery and Basis Initialization} \label{algorithm:SFDBC}
\begin{flushleft}
\textbf{Input:} $\mathbf{Y}^{(d)}$: consumption matrix for each device $d$; 
\\ 
\textbf{Output:} $\boldsymbol{\mathcal{S}}^{(d)}$: shape features for each device $d$; 
~~$\boldsymbol{\mathcal{H}}^{(d)}$: dictionaries for each device $d$
\end{flushleft}
\begin{algorithmic}[1]
\makeatletter\setcounter{ALG@line}{0}\makeatother
\For{$d \gets 1 \text{ to } D$}
	\State $\mathbf{Y}_+\gets\mathbf{Y}^{(d)} > 0$. \Comment{Extract the non-zero values while holding the original time order.}
	\State $\boldsymbol{\mathscr{M}}\gets$ Consumption mapping results of $\mathbf{Y}_+$.
	\State $\boldsymbol{\mathcal{S}}^{(d)}\gets$ Unique$\left(\boldsymbol{\mathscr{M}}\right)$.
	\State $\boldsymbol{\mathcal{H}}^{(d)}\gets$ Extend vectors in $\boldsymbol{\mathcal{S}}^{(d)}$ by filling all other intervals with zeroes.
	\State $\boldsymbol{\mathcal{H}}^{(d)}\gets \boldsymbol{\mathcal{H}}^{(d)} ./ \| \boldsymbol{\mathcal{H}}^{(d)}\|_2$. \Comment{Normalize all the basis functions.}
\EndFor 
\end{algorithmic}
\end{algorithm}
The process used to discover the devices' shape features is summarized in Algorithm~\ref{algorithm:SFDBC}.
The function ``Unique'' at Line 4 indicates the operation applied to remove all the redundant elements, and the operator ``$\mathbf{A} ./ b $'' in Line 6 represents dividing each element in $\mathbf{A}$ by the scalar variable $b$.

Generally, the initialized basis functions in Algorithm~\ref{algorithm:SFDBC} are not the bases that could lead to the most sparse coefficients, although they can provide near-zero construction errors.
For example, the optimal basis function (i.e., the one that achieves the most sparse coefficient) to reconstruct the consumption vector
{
\[ \left[ 0.8, 2.5 \right]^T \]
}
is the one with its exact normalization value
\begin{equation} \label{eq:normalized_basis}
\left[ 0.3048, 0.9524 \right]^T
\end{equation}	
This yields a single coefficient value of 2.6249.
Instead, using the bases generated by shape features, a perfect reconstruction is achieved by
\begin{equation} \label{eq:shape_basis}
0.8 * \left[1, 0
		\right]^T + 2.5 * \left[
					0, 1
					\right]^T
\end{equation}
producing the sum of coefficients $2.5+0.8=3.3 > 2.6249$.
On the other hand, the bases in Eq.~\eqref{eq:shape_basis} are easily generalized to reconstruct other two-interval time series, while the basis in Eq.~\eqref{eq:normalized_basis} lacks this capability.
We are thus motivated to balance these two capabilities by adding basis functions that can potentially yield the most sparse coefficients.
\\\\
\textbf{Basis Smoothness:}
The basis functions will be constructed to better adapt to variances in the consumption data.
The key is to extract the bases that are the normalization values of exact consumption.
If all the combinations of continuous consumption values are enumerated, a huge  number of bases will be generated, which would be prohibitive for fast learning.
\textbf{Cover} is provided to define the relationships of time series data to facilitate pruning the unnecessary candidates
\begin{definition}[Cover] \label{def:cover}
For any two bases $\mathbf{h}_i$ and $\mathbf{h}_j$, 
if $\mathbf{h}_k$ is the same as $\mathbf{h}_j$, then we say $\mathbf{h}_i$ is able to cover $\mathbf{h}_j$,
where $\mathbf{h}_k=\mathbf{h}_k^{\prime} ./ \|\mathbf{h}_k^{\prime}\|^2$, and for $n = 1, 2, \cdots, N$
\begin{equation}
h_{k,n}^{\prime} = 
\begin{cases}
		h_{i, n}  &, \; h_{j, n} > 0\\ 
		0  &, \; h_{j, n} == 0 
\end{cases}
\end{equation}
where $h_{i, n}, h_{j, n}, h^{\prime}_{k, n}$ are respectively the $n^{\text{th}}$ element in vectors $\mathbf{h}_i, \mathbf{h}_j, \mathbf{h}_k^{\prime}$.
\end{definition}
Those bases that can be covered by other bases are redundant since they are generated by excessive enumerations.
Given a time series consumption for Shower,
\begin{equation} \label{eq:ts_shower}
\left[
		17.28,
		9.61,
		1.69
\right]^T
\end{equation}
the anticipated basis is its normalized vector
\begin{equation} \label{eq:n_basis} 
\left[
		0.8708,
		0.4843,
		0.0852
	\right]^T
\end{equation}
In addition to the basis in Eq.~\eqref{eq:n_basis}, the time series in Eq.~\eqref{eq:ts_shower} might produce other bases due to redundant enumerations
\begin{equation}\label{eq:n_excessive_bases} \equationsize
\left[
	\begin{array}{c}
		0.8739 \\
		0.4860 \\
		0.0
	\end{array}
\right],
\left[
	\begin{array}{c}
		0.9953 \\
		0.0 \\
		0.0973
	\end{array}
\right], 
\left[
	\begin{array}{c}
		0.0 \\
		0.9849 \\
		0.1732
	\end{array}
\right] 
\end{equation}
The redundant bases in Eq.~\eqref{eq:n_excessive_bases} are already covered by the basis in Eq.~\eqref{eq:n_basis}.
The pruning process can thus be accelerated with the help of Cover's transitivity property:
\begin{proposition}[Transitivity] \label{prop:transitive}
For any three bases $\mathbf{h}_p$, $\mathbf{h}_q$ and $\mathbf{h}_r$, if $\mathbf{h}_p$ can cover $\mathbf{h}_q$ and $\mathbf{h}_q$ can cover $\mathbf{h}_r$, then $\mathbf{h}_p$ can cover $\mathbf{h}_r$.
\end{proposition}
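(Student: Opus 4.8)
The plan is to translate the definition of \emph{Cover} into two elementary, manifestly composable conditions and then compose them. Write $\operatorname{supp}(\mathbf{h})=\{n:h_n\neq 0\}$; every basis here is nonnegative and unit-normalized (Line~6 of Algorithm~\ref{algorithm:SFDBC}), and I read the normalization in Definition~\ref{def:cover} as that same unit-norm convention. Unwinding the construction of $\mathbf{h}_k$ there: on coordinates outside $\operatorname{supp}(\mathbf{h}_j)$ the vector $\mathbf{h}_k'$ (hence $\mathbf{h}_k$) vanishes, matching $\mathbf{h}_j$ automatically; on coordinates inside $\operatorname{supp}(\mathbf{h}_j)$ the vector $\mathbf{h}_k'$ copies $\mathbf{h}_i$, so after normalization $\mathbf{h}_k$ agrees with $\mathbf{h}_j$ there iff $\mathbf{h}_i$ and $\mathbf{h}_j$ are positive scalar multiples of one another on that index set — which in particular forces every coordinate with $h_{j,n}>0$ to satisfy $h_{i,n}>0$. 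Thus ``$\mathbf{h}_i$ covers $\mathbf{h}_j$'' is equivalent to: $\operatorname{supp}(\mathbf{h}_j)\subseteq\operatorname{supp}(\mathbf{h}_i)$, together with $\mathbf{h}_j|_{\operatorname{supp}(\mathbf{h}_j)}=\alpha\,\mathbf{h}_i|_{\operatorname{supp}(\mathbf{h}_j)}$ for some $\alpha>0$.

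First I would record this equivalence, checking both directions by reading off coordinates: the forward direction is the unwinding just described, and for the converse, if the two conditions hold then $\mathbf{h}_k'=\alpha^{-1}\mathbf{h}_j$ (zero outside $\operatorname{supp}(\mathbf{h}_j)$, proportional to $\mathbf{h}_j$ inside), so normalizing the nonzero vector $\mathbf{h}_k'$ returns the unit vector $\mathbf{h}_j$ exactly, i.e. $\mathbf{h}_k=\mathbf{h}_j$. The only point requiring care is the normalization constant, but since each basis is already a nonnegative unit vector the constant is well defined and strictly positive, with no sign ambiguity.

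Given the equivalence, transitivity is a two-step substitution. Suppose $\mathbf{h}_p$ covers $\mathbf{h}_q$ and $\mathbf{h}_q$ covers $\mathbf{h}_r$. Applying the containment clause twice gives $\operatorname{supp}(\mathbf{h}_r)\subseteq\operatorname{supp}(\mathbf{h}_q)\subseteq\operatorname{supp}(\mathbf{h}_p)$, the containment clause for the pair $(\mathbf{h}_p,\mathbf{h}_r)$. For the collinearity clause, $\mathbf{h}_q=c_1\mathbf{h}_p$ holds on $\operatorname{supp}(\mathbf{h}_q)$ and $\mathbf{h}_r=c_2\mathbf{h}_q$ holds on $\operatorname{supp}(\mathbf{h}_r)$, with $c_1,c_2>0$; restricting the first identity to the nested set $\operatorname{supp}(\mathbf{h}_r)\subseteq\operatorname{supp}(\mathbf{h}_q)$ and substituting yields $\mathbf{h}_r=c_1c_2\,\mathbf{h}_p$ on $\operatorname{supp}(\mathbf{h}_r)$ with $c_1c_2>0$. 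Both clauses hold, so by the equivalence $\mathbf{h}_p$ covers $\mathbf{h}_r$. I do not anticipate a genuine obstacle: the whole difficulty is definitional — pinning down exactly what ``$\mathbf{h}_k$ is the same as $\mathbf{h}_j$'' means once the normalization in Definition~\ref{def:cover} is accounted for, and noting that restricting a proportionality relation to a smaller, nested support discards no relevant information. Everything after that is a one-line computation.
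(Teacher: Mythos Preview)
Your proposal is correct and takes essentially the same approach as the paper: both arguments identify the support (non-zero positions) of each basis, use the covering hypotheses to obtain the nesting $\operatorname{supp}(\mathbf{h}_r)\subseteq\operatorname{supp}(\mathbf{h}_q)\subseteq\operatorname{supp}(\mathbf{h}_p)$, and then observe that restricting-and-normalizing composes along nested supports. Your version is slightly more explicit in first recording the equivalence ``covers $\Leftrightarrow$ support containment plus proportionality on the smaller support,'' whereas the paper works directly with the normalized restrictions $\mathbf{h}_{p,\phi_1}$, $\mathbf{h}_{q,\phi_2}$; but the substance is identical.
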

\begin{proof}:
Let $\phi_1$ and $\phi_2$ respectively denote the set of non-zero positions in bases $\mathbf{h}_q$ and $\mathbf{h}_r$.
Since $\mathbf{h}_q$ can cover $\mathbf{h}_r$, $\phi_2 \subseteq \phi_1$.

Let $\mathbf{h}_{p, \phi_1}$ denote the vector: for $\forall n\in \phi_1$, the value is $h_{p, n}$, and the value is zero for all other positions.
Since $\mathbf{h}_p$ can cover $\mathbf{h}_q$, $\mathbf{h}_{p, \phi_1} ./ \|\mathbf{h}_{p, \phi_1}\|^2$ is the same as $\mathbf{h}_q$.
Similarly, $\mathbf{h}_{q, \phi_2}./\|\mathbf{h}_{q, \phi_2}\|^2$ is the same as $\mathbf{h}_r$.

$\phi_2 \subseteq \phi_1 \Longrightarrow \phi_2 \cap \phi_1=\phi_2$.
$\mathbf{h}_{p, \phi_2} ./ \|\mathbf{h}_{p, \phi_2}\|^2$ would be the same as $\mathbf{h}_{q, \phi_2}./\|\mathbf{h}_{q, \phi_2}\|^2$.
Thus $\mathbf{h}_{p}$ can cover $\mathbf{h}_{r}$.
\end{proof}
\begin{algorithm}[H] \small
\caption{Smooth Basis Functions} \label{algorithm:smooth_bases}
\begin{flushleft}
\textbf{Input:}~~$\mathbf{Y}^{(d)}$: consumption matrix for each device $d$; 
\\
\textbf{Output:} 
~~$\boldsymbol{\mathcal{H}}_s^{(d)}$: smoothed dictionaries for each device $d$
\end{flushleft}
\begin{algorithmic}[1]
\makeatletter\setcounter{ALG@line}{0}\makeatother
\For{$d \gets 1 \text{ to } D$}
	\State $\mathbf{Y}_+\gets\mathbf{Y}^{(d)} > 0$. \Comment{Extract the non-zero values while holding the original time order.}
	\State $\mathbf{C}\gets$ All possible combinations of elements in $\mathbf{Y}_+$ with the constraint of $S_d$ while holding the original time series order.
	\State $\mathbf{C}_h\gets$ Generate candidates of bases by normalizing the vectors in $\mathbf{C}$.
	\State $\boldsymbol{\mathcal{H}}_s^{(d)}\gets$ Remove all the bases in $\mathbf{C}_h$ that can be covered by others.
	\State $\boldsymbol{\mathcal{H}}_s^{(d)}\gets$ Extend vectors in $\boldsymbol{\mathcal{H}}_s^{(d)}$ by filling all other intervals with zeroes.
\EndFor 
\end{algorithmic}
\end{algorithm}
As shown in Algorithm~\ref{algorithm:smooth_bases}, the non-zero values are first extracted from the consumption matrix while holding the original time order.
Under the constraint of the span of devices, candidates are generated by enumerating all possible combinations,
after which the bases are finalized by normalizing the candidates and removing the bases covered by others.


\subsection{Bayesian Sparse Coding with Laplace Prior} \label{subsec:scg}
For each device, we have a Bayesian sparse coding model to capture the corresponding consumption patterns.
Without loss of generality, the labels for device and day can be removed.
Let $\mathbf{y}\in\mathbb{R}^{N\times 1}$ denote one day's water consumption of a particular device,
let $\mathbf{H}\in\mathbb{R}^{N\times M}$ denote the basis functions,
and let $\mathbf{u}$ denote 0-mean, and $\tau$-precision white noise.
The generative model is:
\begin{equation}\label{eq:gen}
\mathbf{y} = \mathbf{Hx} + \mathbf{u}
\end{equation}
The conditional probability of one interval's consumption is given by
\begin{equation} \label{eq:y_i}
P(y_i\mid \mathbf{x}, \tau, \mathbf{H}) = \mathcal{N}( y_i\mid \mathbf{H}_i\mathbf{x}, \tau^{-1} )
\end{equation}
Since $y_1, y_2, \cdots, y_N$ are i.i.d. variables, we have
\begin{equation} \label{eq:y}
P(\mathbf{y}\mid \mathbf{x}, \tau, \mathbf{H}) = \prod_{i=1}^N P(y_i\mid \mathbf{x}, \tau)
\end{equation}
In this model, to guarantee the sparseness, $x_j$ follows a Laplace distribution
\begin{equation}\label{eq:x_j}
\begin{split}
& P(x_j) = \frac{1}{2b}e^{ -\frac{|x_j|}{b} }
\end{split}
\end{equation}
Since $x_1, x_2, \cdots, x_M$ are i.i.d. variables, we have
\begin{equation}\label{eq:x}
\begin{split}
& P(\mathbf{x}) = \prod_{j=1}^{M} \frac{1}{2b}e^{ -\frac{|x_j|}{b} }
\end{split}
\end{equation}
The parametric form~\eqref{eq:x_j} provides a probabilistic generative description of the coefficient.
The prior distribution on $\tau$ follows a Gamma distribution with hyper-parameters $\alpha_0,\beta_0$:
\begin{equation}\label{eq:tau}
P(\tau\mid \alpha_0,\beta_0) = \text{Gamma}(\tau\mid\alpha_0,\beta_0)
\end{equation}
This model can be expressed as a directed graph, illustrated in Figure~\ref{fig:graphical_model}.
\begin{figure}[ht!]
\centering
\fbox{\includegraphics[width=0.4\textwidth]{./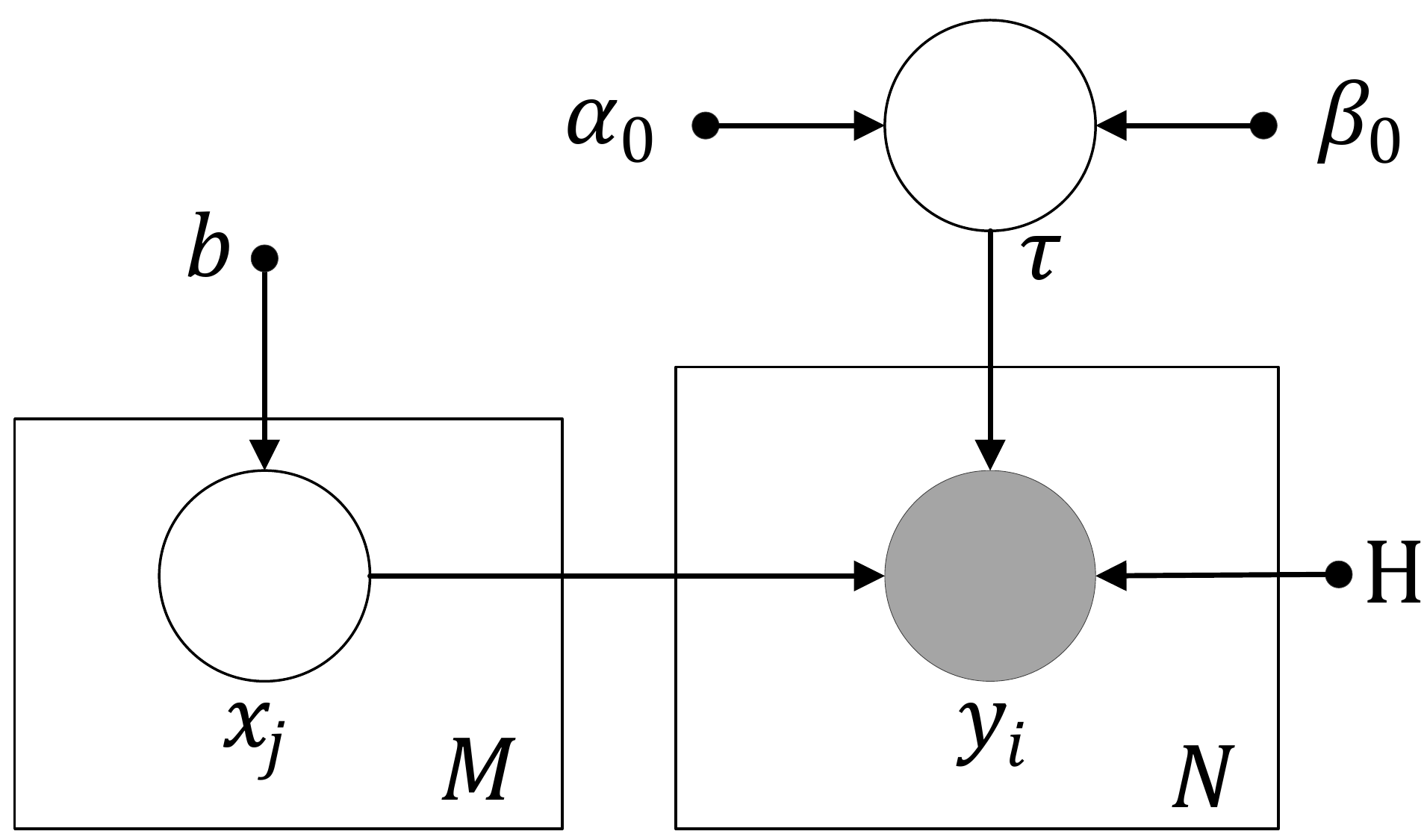}}
\caption{Representation of the generative model as
a directed acyclic graph. 
The observed variable $y_i$ is shown by the shaded node, 
while the latent variables $x_j$ and $\tau$ are shown by the circle.
The right box represents the $N$ independent consumption intervals from the data set,
while the left box represents the $M$ independent coefficients.
$\mathbf{b}$ and $\mathbf{H}$ are model parameters.
$\alpha_0$ and $\beta_0$ are hyperparameters.
}
\label{fig:graphical_model}
\end{figure}

\subsection{Learning the Discriminative Disaggregation Dictionaries} \label{subsec:bddm}
Section~\ref{subsec:scg} presents the Bayesian generative model for each device, and the basis functions (i.e., reconstruction dictionary) is optimized for reconstructing the individual consumption $\mathbf{y}^{(d)}$ where $d=1, 2, \cdots, D$, instead of optimized for disaggregating the aggregated consumption $\mathbf{\bar{y}}$.
Since the reconstruction dictionaries are not designed to enhance the disaggregation performance, the dictionaries may not be capable of effectively separating the aggregated data.
Based on the fact that distributions of coefficients are usually invariant over time or homes, we design a Bayesian discriminative model to train the disaggregation dictionaries using the aggregated data by holding the coefficients' parameters unchanged, targeting for enhancing the disaggregation performance.

Let $\bar{\mathbf{H}} = [ \mathbf{H}^{(1)}, \mathbf{H}^{(2)}, \cdots, \mathbf{H}^{(D)} ]$ denote the compound basis functions(i.e., disaggregation dictionaries),
where $\mathbf{H}^{(d)}, d=1, \cdots, D$ is the reconstruction dictionary trained using the Bayesian sparse coding model.
Let $\bar{\mathbf{y}} = \sum_{d=1}^D \mathbf{y}^{(d)}$ denote the aggregated consumption, and let \[ 
\bar{\mathbf{x}} = \left[ 
\begin{array}{c}
	\mathbf{x}^{(1)} \\
	\mathbf{x}^{(2)} \\
	\vdots  \\
	\mathbf{x}^{(D)}
\end{array} 
	\right]
 \]
denote the compound coefficients.
The key is to estimate the usages of individual devices: $\mathbf{\hat{y}}^{(1)}, \mathbf{\hat{y}}^{(2)}, \cdots, \mathbf{\hat{y}}^{(D)}$ from the total consumption $\mathbf{\bar{y}}$.

The aggregated consumption $\mathbf{\bar{y}}$ can be expressed as
\begin{equation}\label{eq:gen_bar_y}
\mathbf{\bar{y}} = \mathbf{\bar{H}} \mathbf{\bar{x}} + \mathbf{\bar{u}}
\end{equation}
where $\mathbf{\bar{u}}$ is the 0-mean, $\bar{\tau}$-precision white noise.
From the generative model for the aggregated consumption, we observe that $\mathbf{\bar{x}}$ denotes the overall coefficients and $\mathbf{\bar{H}}$ denotes the basis functions for constructing the aggregated consumption. 
The key here is to learn the discriminative $\mathbf{\bar{H}}$ for estimating individual devices' consumption.
Given the normalized basis functions, the active coefficients mainly depend on the consumption amplitude of individual devices, while the non-active coefficients are near-zero values.
The parameters of coefficients learned with individual devices' data should be optimal if they are to adequately represent the invariant patterns captured by distributions of coefficients since they are learned with device level data.
The discriminative capability of $\mathbf{\bar{H}}$ can thus be extended through training with the aggregated data while keeping parameters of coefficients unchanged (i.e., $b^{(1)}, b^{(2)}, \cdots, b^{(D)}$).

The conditional probability of one interval's aggregated consumption is given by $\bar{y}_i$
\begin{equation}\label{eq:bar_y|bar_x,bar_tau}
P(\bar{y}_i\mid \mathbf{\bar{x}, \mathbf{\bar{H}}}, \bar{\tau}) = \mathcal{N}( \bar{y}_i\mid \mathbf{\bar{H}}_i\mathbf{\bar{x}}, \bar{\tau}^{-1} )
\end{equation}
The aggregated interval based consumption $\bar{y}_1, \bar{y}_2, \cdots, \bar{y}_N$ are i.i.d. variables:
\begin{equation}\label{eq:bar_ay|bar_x,bar_tau}
P(\mathbf{\bar{y}}\mid \mathbf{\bar{x}, \mathbf{\bar{H}}}, \bar{\tau}) =
\prod_{i=1}^N P(\bar{y}_i\mid \mathbf{\bar{x}, \mathbf{\bar{H}}}, \bar{\tau})
\end{equation}
The prior distribution on $\bar{\tau}$ is given by a Gamma distribution:
\begin{equation}\label{eq:bar_tau}
P(\bar{\tau}\mid\bar{\alpha}_0, \bar{\beta}_0) = \text{Gamma}(\bar{\tau}\mid \bar{\alpha}_0, \bar{\beta}_0)
\end{equation}

\section{Inference and Learning} \label{sec:gs_infer}
Section~\ref{subsec:i_bscm} introduces the inference process for the Bayesian sparse coding model, and Section~\ref{subsec:i_ddm} presents the learning over the discriminative disaggregation model.
The predictive density is evaluated in Section~\ref{subsec:epd}.

\subsection{Inference on Bayesian Sparse Coding Model}\label{subsec:i_bscm}
EM algorithm~\cite{Dempster:77} is applied to maximize the likelihood function and estimate the model parameters.
The evaluation function is:
\begin{equation}\label{eq:em_evaluate}
Q(\boldsymbol{\theta}, \boldsymbol{\theta}^{\text{old}}) = \int_{\mathbf{W}} P(\mathbf{W}\mid \mathbf{y}, \boldsymbol{\theta}^{\text{old}}) \ln P(\mathbf{y}, \mathbf{W}\mid \boldsymbol{\theta})
\end{equation}
where $\boldsymbol{\theta}=\{\mathbf{b}, \mathbf{H}\}$ denote model parameters,
$\mathbf{W}=\{\mathbf{x}, \tau\}$ denotes the latent variables.
In the E step, $Q(\boldsymbol{\theta}, \boldsymbol{\theta}^{\text{old}})$ is evaluated, while in the M step, $Q(\boldsymbol{\theta}, \boldsymbol{\theta}^{\text{old}})$ is maximized with respect to $\boldsymbol{\theta}$.
The key problem is to evaluate the expectation of the joint probability over the posterior distribution of latent variables.
A Gibbs sampling~\cite{Geman:84} based inference method is designed to estimate the expectation value:
\begin{enumerate}
	\item $t=0$. Set initial values $\mathbf{x}^{(0)}, \tau^{(0)}$
	
	\item Generate $\tau^{(t+1)}\sim P(\tau\mid \mathbf{y}, \mathbf{x}^{(t)}, \boldsymbol{\theta}^{\text{old}})$

	\item Generate $\mathbf{x}^{(t+1)}\sim P(\mathbf{x}\mid \mathbf{y}, \tau^{(t+1)}, \boldsymbol{\theta}^{\text{old}})$
	
	\item $t = t + 1$. Go to 2 if $t <= T$.
\end{enumerate}
where $T$ is the total number of samples that need to be generated.
Then, we have
\begin{equation}\label{eq:e_evaluation}
Q(\boldsymbol{\theta}, \boldsymbol{\theta}^{\text{old}}) \simeq \frac{1}{T-s}\sum_{t=s+1}^T \ln P(\mathbf{y}, \mathbf{W}^{(t)}\mid \boldsymbol{\theta})
\end{equation}
where $s$ is the threshold for abandoning the starting samples to remove the effect of bad initializations.

Now we need to derive the distribution forms for all hidden variables.
The distribution of $\tau$ given observations and other hidden variables is
\begin{equation}\label{eq:tau|others_lap}
\begin{split}
P(\tau\mid \mathbf{y, x}, \boldsymbol{\theta}) 
& = \frac{ P(\mathbf{y, x},\tau, \boldsymbol{\theta}) }{P(\mathbf{y, x}\mid \boldsymbol{\theta})}
\propto \text{Gamma}(\tau\mid \alpha_N, \beta_N)
\end{split}
\end{equation}
where $\alpha_N = \alpha_0 + \frac{N}{2}, \beta_N=\beta_0+\frac{1}{2}\sum_{i=1}^N (y_i-\mathbf{H}_i\mathbf{x})^2$.
The log form of the distribution of $x_j$ given observations and other hidden variables is
\begin{equation}\label{eq:x_j|others2_lap}
\begin{split}
&\ln P(x_j\mid \mathbf{y}, \mathbf{x}\setminus x_j, \tau, \boldsymbol{\theta}) 
\propto \sum_{i=1}^N 
-\frac{\tau}{2}\bigg[ \big(H_{ij}x_j\big)^2 - 2H_{ij}x_j\big( y_i-\sum_{j^{\prime}\neq j}H_{ij^{\prime}}x_{j^{\prime}} \big) \bigg]
 - \frac{|x_j|}{b}  \bigg]
\end{split}
\end{equation}

In the maximization step (\textbf{M} step), we intend to maximize $Q(\boldsymbol{\theta}, \boldsymbol{\theta}^{\text{old}})$ with respect to $\boldsymbol{\theta}$.
From Eq.~\eqref{eq:e_evaluation}, we know,
\begin{equation} \label{eq:derivation_lap}
\begin{split}
Q(\boldsymbol{\theta}, \boldsymbol{\theta}^{\text{old}}) 
& \simeq \frac{1}{T-s}\sum_{t=s+1}^T \ln P(\mathbf{y}, \mathbf{W}^{(t)}\mid \boldsymbol{\theta}) = \frac{1}{T-s}\sum_{t=s+1}^T \left( \mathcal{F}_1 + \mathcal{F}_2
+ \mathcal{F}_3 \right)
\end{split}
\end{equation}
where,
\begin{equation}\label{eq:notation_lap}
\begin{split}
&\mathcal{F}_1 = \sum_{i=1}^N \bigg[ 0.5 \ln \frac{\tau^{(t)}}{2\pi} - \frac{\tau^{(t)}}{2}\bigg( y_i^2 - 2y_1\sum_{j=1}^M H_{ij}x_j^{(t)} + (\sum_{j=1}^M H_{ij} x_j^{(t)})^2 \bigg)  \bigg]
\\
& \mathcal{F}_2 = \sum_{j=1}^M\left( \ln{\frac{1}{2b}} - \frac{|x_j^{(t)}|}{b} \right)
\\
&\mathcal{F}_3 = \alpha_0\ln \beta_0 - \ln \Gamma(\alpha_0) + (\alpha_0-1)\ln\tau^{(t)}-\beta_0\tau^{(t)}
\end{split}
\end{equation}
The derivatives of these three functions on $H_{ij}$ and $b$ are performed to complete the maximization step:
\begin{equation}\label{eq:maximization_H_ij_lap}
\begin{split}
&\frac{\partial \mathcal{F}_1}{\partial H_{ij}} = -\frac{\tau^{(t)}}{2} \bigg[ 
	2 (x_j^{(t)})^2 H_{ij}  - 2y_ix_j^{(t)} + 2\sum_{j^{\prime}\neq j} H_{ij^{\prime}} x_{j^{\prime}}^{(t)} x_j^{(t)}
\bigg]
\end{split}
\end{equation}
\begin{equation}\label{eq:maximization_b_lap}
\begin{split}
&\frac{\partial \mathcal{F}_2}{\partial b} = \sum_{j=1}^M \left( - \frac{1}{b} + \frac{|x_j^{(t)}|}{b^2} \right)
\end{split}
\end{equation}
Hence, the learning rules can be achieved from above equations,
\begin{equation} \label{eq:learning_rules_lap}
\begin{split}
& H_{ij} = \frac{1}{T-s}\sum_{t=s+1}^T\frac{ y_i - \sum_{j^{\prime}\neq j} H_{ij^{\prime}} x_{j^{\prime}}^{(t)} }{(x_j^{(t)})^2}
\\
& b = \frac{1}{T-s}\sum_{t=s+1}^T \frac{\sum_{j}^{m}|x_j^{(t)}|}{M}
\end{split}
\end{equation}

\subsubsection*{Hyperparameters Optimization} \label{subsec:hyper_optimization}
In addition to optimizing the model parameters, the hyperparameters can be updated to further enlarge the likelihood.
Following the maximization step, $\mathcal{F}_3$ can be maximized with respect to $\alpha_0, \beta_0$.
The derivatives are given by,
\begin{equation}\label{eq:m_alpha_0_lap}
\frac{\partial \mathcal{F}_3}{\partial \alpha_0} = \ln \beta_0 - \psi(\alpha_0) + \ln\tau^{(t)}
\end{equation}
\begin{equation}\label{eq:m_beta_0_lap}
\frac{\partial \mathcal{F}_3}{\partial \beta_0} =  \frac{\alpha_0}{\beta_0} -\tau^{(t)}
\end{equation}
Then, the learning rules are,
\begin{equation} \label{eq:hyper_learning_rules_lap}
\begin{split}
& \psi(\alpha_0) =  \ln \beta_0 + \frac{1}{T-s}\sum_{t=s+1}^T \ln\tau^{(t)}
\\
& \beta_0 = \frac{1}{T-s}\sum_{t=s+1}^T \frac{\alpha_0}{ \tau^{(t)} }
\end{split}
\end{equation}

As shown in Algorithm~\ref{algorithm:parameters_learning}, the bases generated in Algorithms~\ref{algorithm:SFDBC} and~\ref{algorithm:smooth_bases} are combined together to form the initialized basis functions for learning.
Note that good initializations of the basis functions could significantly reduce the number of iterations and improve the disaggregation performance. 
\begin{algorithm}[H] \small
\caption{Learning Parameters for Individual Devices} 
\label{algorithm:parameters_learning}
\begin{flushleft}
\textbf{Input:} $\mathbf{Y}^{(d)}$: consumption matrix for each device $d$.
\\
\textbf{Output:} $\mathbf{H}^{(d)}$: basis functions for each device $d$; 
~~$b^{(d)}, \alpha_0^{(d)}, \beta_0^{(d)}$: parameters for each device $d$.
\end{flushleft}
\begin{algorithmic}[1]
\makeatletter\setcounter{ALG@line}{0}\makeatother
\For{$d \gets 1 \text{ to } D$}
	\State $\mathbf{H}^{(d)}\gets \boldsymbol{\mathcal{H}}^{(d)}\cup \boldsymbol{\mathcal{H}}_s^{(d)}$. \Comment{Initialize the basis functions with the bases generated by Algorithms~\ref{algorithm:SFDBC} and~\ref{algorithm:smooth_bases}.}
	\Repeat
		\State Generate the samples for $\tau, \mathbf{x}$ with schemas defined in Section~\ref{subsec:i_bscm}.
		\State Update $\mathbf{H}^{(d)}, b^{(d)}$ using Eq.~\eqref{eq:learning_rules_lap}.
		\State Update $\alpha_0^{(d)}, \beta_0^{(d)}$ using Eq.~\eqref{eq:hyper_learning_rules_lap}.
		\State Evaluate $Q(\boldsymbol{\theta}, \boldsymbol{\theta}^{(\text{old})})$ using Eq.~\eqref{eq:derivation_lap}. 
	\Until{Convergence}
\EndFor 
\end{algorithmic}
\end{algorithm}

\subsection{Inference on the Discriminative Disaggregation Model}\label{subsec:i_ddm}
Now we are ready to learn the discriminative disaggregation model to enhance the discriminative power of the basis function $\mathbf{\bar{H}}$.
Similarly, EM algorithm~\cite{Dempster:77} is applied and the evaluation function is given by
\begin{equation}\label{eq:eva_dEM_lap}
Q(\mathbf{\bar{H}}, \mathbf{\bar{H}}^{\text{old}}) = \int_{\mathbf{\bar{W}}} P(\mathbf{\bar{W}}\mid \mathbf{\bar{y}}, \mathbf{\bar{H}}^{\text{old}}) \ln P(\mathbf{\bar{y}}, \mathbf{\bar{W}}\mid \mathbf{\bar{H}})
\end{equation}
where $\mathbf{\bar{W}}=\{\mathbf{\bar{x}}, \bar{\tau}\}$ denote the latent variables.
In the E step, $Q(\mathbf{\bar{H}}, \mathbf{\bar{H}}^{\text{old}})$ is evaluated, while in the M step, $Q(\mathbf{\bar{H}}, \mathbf{\bar{H}}^{\text{old}})$ is maximized with respect to $\mathbf{\bar{H}}$.

Gibbs sampling~\cite{Geman:84} is applied to estimate the expectation value:
\begin{enumerate}
	\item $t=0$. Set initial values $\mathbf{\bar{x}}^{(0)}, \bar{\tau}^{(0)}$
	
	\item Generate $\bar{\tau}^{(t+1)}\sim P(\bar{\tau}\mid \mathbf{\bar{y}}, \mathbf{\bar{x}}^{(t)}, \mathbf{\bar{H}}^{\text{old}})$
	
	\item Generate $\mathbf{\bar{x}}^{(t+1)}\sim P(\mathbf{\bar{x}}\mid \mathbf{\bar{y}}, \bar{\tau}^{(t+1)}, \mathbf{\bar{H}}^{\text{old}})$
	
	\item $t = t + 1$. Go to 2 if $t <= T$.
\end{enumerate}
where $T$ is the total number of samples that need to be generated.
Then, we have
\begin{equation}\label{eq:e_deva_lap}
Q(\mathbf{\bar{H}}, \mathbf{\bar{H}}^{\text{old}}) \simeq \frac{1}{T-s}\sum_{t=s+1}^T \ln P(\mathbf{\bar{y}}, \mathbf{\bar{W}}^{(t)}\mid \mathbf{\bar{H}}^{\text{old}})
\end{equation}
where $s$ is the threshold for abandoning the starting samples to remove the effect of bad initializations.

The distribution of $\bar{\tau}$ given the aggregated consumption and other latent variables is
\begin{equation}\label{eq:bar_tau|others_lap}
\begin{split}
P(\bar{\tau}\mid \mathbf{\bar{y}, \bar{x}}, \mathbf{\bar{H}}) 
& = \frac{ P(\mathbf{\bar{y}, \bar{x}}, \bar{\tau} \mid \mathbf{\bar{H}}) }{P(\mathbf{\bar{y}, \bar{x}}\mid \mathbf{\bar{H}})}
\propto \text{Gamma}(\bar{\tau}\mid \bar{\alpha}_N, \bar{\beta}_N)
\end{split}
\end{equation}
where $\bar{\alpha}_N = \bar{\alpha}_0 + \frac{N}{2}, \bar{\beta}_N=\bar{\beta}_0+\frac{1}{2}\sum_{i=1}^N (\bar{y}_i-\mathbf{\bar{H}}_i\mathbf{\bar{x}})^2$.
The distribution of $\bar{\mathbf{x}}$ given the aggregated consumption and other latent variables is
\begin{equation}\label{eq:bar_x|others_lap}
\begin{split}
P(\mathbf{\bar{x}}\mid \mathbf{\bar{y}}, \bar{\tau}, \mathbf{\bar{H}}) 
& = \frac{ P(\mathbf{\bar{y}}, \bar{\tau}, \mathbf{\bar{x}} \mid \mathbf{\bar{H}}) }{P(\mathbf{\bar{y}}, \bar{\tau}, \mid \mathbf{\bar{H}})}
\propto P(\mathbf{\bar{y}} \mid \mathbf{\bar{x}}, \bar{\tau}, \mathbf{\bar{H}}) P(\mathbf{\bar{x}})
\end{split}
\end{equation}

During the course of maximization, $Q(\mathbf{\bar{H}}, \mathbf{\bar{H}}^{\text{old}})$ is maximized with respect to $\mathbf{\bar{H}}$ while holding the coefficients' parameters ($b^{d}, d=1,2,\cdots, D$) unchanged.
From Eq.~\eqref{eq:e_deva_lap}, we get
\begin{equation} \label{eq:d_derivation_lap}
\begin{split}
Q(\mathbf{\bar{H}}, \mathbf{\bar{H}}^{\text{old}}) 
& \simeq \frac{1}{T-s}\sum_{t=s+1}^T \ln P(\mathbf{\bar{y}}, \mathbf{W}^{(t)}\mid \mathbf{\bar{H}}) 
= \frac{1}{T-s}\sum_{t=s+1}^T \left(\bar{\mathcal{F}_1} + \bar{\mathcal{F}_2} + \bar{\mathcal{F}_3}\right)
\end{split}
\end{equation}
where $\bar{b} = b^{(\bar{d})}, \bar{d} = \lceil \frac{j}{M} \rceil$, and $\bar{M} = M\times D$ is the total number of elements in $\mathbf{\bar{x}}$, and
\begin{equation}\label{eq:d_notation_lap}
\begin{split}
&\mathcal{\bar{F}}_1 = \sum_{i=1}^N \bigg[ 0.5 \ln \frac{\bar{\tau}^{(t)}}{2\pi} - \frac{\bar{\tau}^{(t)}}{2}\bigg( \bar{y}_i^2 - 2\bar{y}_i\sum_{j=1}^{\bar{M}} \bar{H}_{ij}\bar{x}_j^{(t)} + (\sum_{j=1}^{\bar{M}} \bar{H}_{ij} \bar{x}_j^{(t)})^2 \bigg)  \bigg] 
\\
& \mathcal{\bar{F}}_2 = \sum_{j=1}^{\bar{M}} \left( \ln{\frac{1}{2\bar{b}}} - \frac{|\bar{x}_j^{(t)}|}{\bar{b}} \right)
\\
&\mathcal{\bar{F}}_3 = \bar{\alpha}_0\ln \bar{\beta}_0 - \ln \Gamma(\bar{\alpha}_0) + (\bar{\alpha}_0-1)\ln\bar{\tau}^{(t)}-\bar{\beta}_0\bar{\tau}^{(t)}
\end{split}
\end{equation}
Now we derive the derivatives of these three functions on $\bar{H}_{ij}$:
\begin{equation}\label{eq:d_maximization_H_ij_lap}
\begin{split}
&\frac{\partial \mathcal{\bar{F}}_1}{\partial \bar{H}_{ij}} = -\frac{\bar{\tau}^{(t)}}{2} \bigg[ 
	2 (\bar{x}_j^{(t)})^2 \bar{H}_{ij}  - 2\bar{y}_i\bar{x}_j^{(t)} + \sum_{j^{\prime}\neq j} \bar{H}_{ij^{\prime}} \bar{x}_{j^{\prime}}^{(t)} \bar{x}_j^{(t)}
\bigg]
\end{split}
\end{equation}
The learning rules can be achieved from above equation:
\begin{equation} \label{eq:d_learning_rules_lap}
\begin{split}
&\bar{H}_{ij} = \frac{1}{T-s}\sum_{t=s+1}^T\frac{ 2\bar{y}_i - \sum_{j^{\prime}\neq j} \bar{H}_{ij^{\prime}} \bar{x}_{j^{\prime}}^{(t)} }{2\bar{x}_j^{(t)}}
\end{split}
\end{equation}

\textbf{Optimizing $\bar{\alpha}_0$ and $\bar{\beta}_0$:} \label{subsec:d_hyper_optimization_lap}
Following the maximization step, $\mathcal{\bar{F}}_3$ can be maximized with respect to $\bar{\alpha}_0, \bar{\beta}_0$.
The derivatives of $\mathcal{\bar{F}}_3$ over these two parameters are
\begin{equation}\label{eq:d_m_alpha_0_lap}
\frac{\partial \mathcal{\bar{F}}_3}{\partial \bar{\alpha}_0} = \ln \bar{\beta}_0 - \psi(\bar{\alpha}_0) + \ln\bar{\tau}^{(t)}
\end{equation}
\begin{equation}\label{eq:d_m_beta_0_lap}
\frac{\partial \mathcal{\bar{F}}_3}{\partial \bar{\beta}_0} =  \frac{\bar{\alpha}_0}{\bar{\beta}_0} -\bar{\tau}^{(t)}
\end{equation}
The learning rules are
\begin{equation} \label{eq:d_hyper_learning_rules_lap}
\begin{split}
& \psi(\bar{\alpha}_0) =  \ln \bar{\beta}_0 + \frac{1}{T-s}\sum_{t=s+1}^T \ln\bar{\tau}^{(t)}
\\
& \bar{\beta}_0 = \frac{1}{T-s}\sum_{t=s+1}^T \frac{\bar{\alpha}_0}{ \bar{\tau}^{(t)} }
\end{split}
\end{equation}

As shown in Algorithm~\ref{algorithm:discriminative_training}, we first initialize the discriminative basis functions by combining the bases generated in Algorithms~\ref{algorithm:parameters_learning}.
With an iterative process, the discriminative capability of the disaggregation dictionaries are enhanced.
\begin{algorithm}[H] \small
\caption{Learning the Discriminative Parameters for Disaggregation} 
\label{algorithm:discriminative_training}
\begin{flushleft}
\textbf{Input:} $\mathbf{\bar{Y}}$: aggregated consumption matrix; 
\\
\textbf{Output:} $\mathbf{\bar{H}}$: the discriminative basis functions;
$\bar{\alpha}_0, \bar{\beta}_0$: Gamma distribution's parameters.
\end{flushleft}
\begin{algorithmic}[1]
\makeatletter\setcounter{ALG@line}{0}\makeatother
\State $\mathbf{\bar{H}}\gets [\mathbf{H}^{(1)}, \cdots, \mathbf{H}^{(D)}]$. \Comment{Initialize the basis functions with learned bases in Algorithm~\ref{algorithm:parameters_learning}.}
\Repeat
	\State Generate the samples for $\bar{\tau}, \bar{\mathbf{x}}$ with schemas defined in Section~\ref{subsec:i_ddm}.
	\State Update $\mathbf{\bar{H}}$ using Eq.~\eqref{eq:d_learning_rules_lap}.
	\State Update $\bar{\alpha}_0, \bar{\beta}_0$ using Eq.~\eqref{eq:d_hyper_learning_rules_lap}.
	\State Evaluate $Q(\mathbf{H}, \mathbf{H}^{(\text{old})})$ using Eq.~\eqref{eq:d_derivation_lap}. 
\Until{Convergence}
\end{algorithmic}
\end{algorithm}

\subsection{The Evaluation of the Predictive Density} \label{subsec:epd}
Now we intend to estimate the values of $\mathbf{\bar{X}}$.
\begin{equation}\label{eq:eva_PD_lap}
\begin{split}
P\left( \hat{\mathbf{y}}^{(1)}, \hat{\mathbf{y}}^{(2)}, \cdots, \hat{\mathbf{y}}^{(D)} \mid \mathbf{\bar{y}}, \boldsymbol{\bar{\theta}} \right)
& = \int P\left( \hat{\mathbf{y}}^{(1)}, \hat{\mathbf{y}}^{(2)}, \cdots, \hat{\mathbf{y}}^{(D)} \mid \mathbf{\bar{W}}, \boldsymbol{\bar{\theta}} \right) 
P\left( \mathbf{\bar{W}} \mid \mathbf{\bar{y}}, \boldsymbol{\bar{\theta}} \right) d\mathbf{\bar{W}}
\\
& = \mathbb{E}_{\mathbf{\bar{W}}\mid \mathbf{\bar{y}}} \left[ \prod_{d=1}^D P( \hat{\mathbf{y}}^{(d)} \mid \mathbf{\bar{W}}, \boldsymbol{\bar{\theta}} ) \right]
\end{split}
\end{equation}
Sampling method can be used to estimate the expectation of $\prod_{d=1}^D P( \hat{\mathbf{y}}^{(d)} \mid \mathbf{\bar{W}}, \boldsymbol{\bar{\theta}} )$ over the posterior distribution.
Suppose we get a series of samples $\mathbf{\bar{W}}^{(s+1)}, \mathbf{\bar{W}}^{(s+2)}, \cdots, \mathbf{\bar{W}}^{(T)}$ (the first $s$ number of samples have been discarded to remove the effect of bad initialization), then the predictive density is
\begin{equation}\label{eq:eva_PD_sampling_lap}
\begin{split}
P\left( \hat{\mathbf{y}}^{(1)}, \hat{\mathbf{y}}^{(2)}, \cdots, \hat{\mathbf{y}}^{(D)} \mid \mathbf{\bar{y}} \right)
& \simeq \prod_{d=1}^D \left[ \frac{1}{T-s} \sum_{t=s+1}^T  P( \hat{\mathbf{y}}^{(d)} \mid \mathbf{\bar{W}}^{(t)}, \boldsymbol{\bar{\theta}} ) \right]
\\
& = \prod_{d=1}^D \left[ \frac{1}{T-s} \sum_{t=s+1}^T \prod_{i=1}^N  
\mathcal{N}(\hat{y}_i^{(d)} \mid \mathbf{\bar{H}}^{(d)} \bar{\mathbf{x}}^{(d,t)}, \bar{\tau}^{(t)} )\right]
\\
\end{split}
\end{equation}
And the mode of $\hat{y}_i^{(d)}$ is 
\begin{equation}\label{eq:y_mode_lap}
\hat{y}_i^{(d)} = \frac{1}{T-s}\sum_{t=s+1}^T \mathbf{\bar{H}}^{(d)} \bar{\mathbf{x}}^{(d,t)}
\end{equation}

\section{Experimental Evaluations} \label{sec:experiment}
Comprehensive experiments on the proposed models were conducted in order to evaluate the disaggregation performance.
Section~\ref{subsec:dataset} introduces the experimental design and setup.
Section~\ref{subsec:scalable-syn} evaluates model performance using synthetic datasets of various sizes.
Section~\ref{subsec:large-real} evaluates model performance using a large scale real world dataset.

\subsection{Dataset and Setup} \label{subsec:dataset}
\textbf{Dataset:} 
A real-world dataset was collected by Aquacraft~\cite{Mayer:Dataset}, consisting of 1,959,817
water use events recorded during a two-year study from 1,188 households across 12 study sites, including Boulder,  Denver, etc.
Each device was labeled with one of 17 categories, and 5 common device types were considered in the experiments: Faucet, Dishwasher, Toilet, Shower, and Clothes Washer.
Since the widely deployed smart meters report at a low sample rate~\cite{Chen11}, the event records were generalized into time series with a sample rate of 1/900 Hz.
\\\\
\textbf{Baselines: } 
The two proposed models, BDSC-LP (Bayesian Discriminative Sparse Coding with Laplace Prior) and BDSC-LP+SF (Bayesian Discriminative Sparse Coding with Laplace Prior and Shape Features), were compared with the following baselines.
The BDSC-LP model is the method without using the shape features for the initialization of the basis functions.
The first baseline is Discriminative Disaggregation Sparse Coding (DDSC)~\cite{Kolter10} with Shape Features (SF), i.e., DDSC+SF, which uses shape features for the learning of DDSC's basis functions.
An approach combining DDSC with its extensions Total Consumption Priors (TCP) and Group Lasso (GL), i.e., DDSC+TCP+GL is the third baseline.
The third baseline is DDSC model, and 
the final baseline used for comparison is the Factorial Hidden Markov Model (FHMM)~\cite{Kolter:112,Kim:fhmm}.
\\\\
\textbf{Evaluation Metrics: }
Both whole-home and device level evaluation metrics are inspected, and the whole-home level disaggregation capability measured utilizing Accuracy~\cite{Kolter10} and Normalized Disaggregation Error (NDE)~\cite{Kolter:12}: Accuracy evaluates the total-day accuracy of the estimation methods, while NDE measures how well the models separate individual devices' consumption from the aggregated consumption
\begin{equation} \label{eq:accuracy}
\text{Accuracy} = \frac{\sum_{d, p} \min{  \left( \left\Vert \mathbf{y}^{(d)}_{\cdot, p} \right\Vert_1, \left\Vert \hat{\mathbf{y}}^{(d)}_{\cdot, p} \right\Vert_1\right) } }{\sum_{i, p} (\bar{y})_{i,p}  }
\end{equation}
\begin{equation} \label{eq:nde}
\text{NDE} = \sqrt{ \sum_{d, p} \left( \frac{ \left\Vert \mathbf{y}^{(d)}_{\cdot, p} - \hat{\mathbf{y}}^{(d)}_{\cdot, p} \right\Vert_2^2 }{  \left\Vert \mathbf{y}^{(d)}_{\cdot, p} \right\Vert_2^2 } \right)}
\end{equation}
where $\hat{\mathbf{y}}^{(d)}_{\cdot, p}$ is the estimated consumption for device $d$ at the $p^{\text{th}}$ day.

With respect to device level evaluation, the quantitative precision, recall and f-measure are defined: the precision is the fraction of disaggregated consumption that is correctly separated, recall is the fraction of true device level consumption that is successfully separated, and the F-measure for device $d$ is: $F(d) = 2\times\frac{ \text{Precision}(d)\times \text{Recall}(d) }{\text{Precision}(d) + \text{Recall}(d)}$, where
\begin{equation}\label{eq:precision}
\text{Precision}(d) = \frac{ \sum_{i, p} \min{ \left( y^{(d)}_{i, p}, \hat{y}^{(d)}_{i, p} \right)   } }{ \sum_{p, i} \hat{y}^{(d)}_{i, p}   }
\end{equation}
\begin{equation}\label{eq:recall}
\text{Recall}(d) = \frac{ \sum_{i, p} \min{ \left( y^{(d)}_{i, p}, \hat{y}^{(d)}_{i, p} \right)   } }{ \sum_{p, i} y^{(d)}_{i, p}   }
\end{equation}
where $\hat{y}^{(d)}_{i, p}$ is the estimated consumption for device $d$ at the $i^{\text{th}}$ interval in the $p^{\text{th}}$ day.
Additionally, the average F-measure is used to evaluate the models' overall disaggregation performance: $AF = \frac{1}{D} \sum_{d=1}^D F(d)$
\\\\
\textbf{The Generation of a Representative Synthetic Dataset:}
The goal here is to design a data generator to produce a representative synthetic dataset with a sampling rate of 1/900 Hz. 
The generator consists of three components: event dictionary construction, frequency pattern learning and data generation.
\\
\indent \textit{(1). \textbf{Event dictionary construction}: } 
Five water events (corresponding to five water devices) are considered: Faucet, Dishwasher, Toilet, Shower and Clothes Washer.
The event dictionary was built based on the real world dataset, where each event type pointed to a set of event records for this particular event type. 
\\
\indent \textit{(2). \textbf{Learning frequency patterns} : }
The daily and interval frequency of events were statistically calculated, where daily frequency was the number of events that happen in one day while interval frequency was the number of events starting from the corresponding interval.
Based on the assumption that daily frequency of events followed a Poisson distribution, the data was used to fit a Poisson distribution with a maximum-likelihood estimation, and the estimated parameters of all events are shown in Table~\ref{table:mle_poisson}.
The Cumulative Distribution Function (CDF) of the interval frequency was estimated with a kernel density function
, and the CDFs of events are illustrated in Figure~\ref{fig:cdf_devices} (located at~\ref{subsec:appC1}).
\\
\indent \textit{(3). \textbf{Data generation}: }
The data was generated day by day.
For each day, the Poisson distribution, which was trained with daily frequency, was first used to sample the number of events for one day.
Then with the CDFs learned with the starting interval frequency, the starting intervals of the events were sampled for a particular day.
Finally, the event records were randomly selected from the event dictionary.
Using this procedure, the simulation data was generated for 50, 100, 400, 800 and 1000 days to perform scalable evaluations.

\begin{table}[ht!]\scriptsize 
\centering
\begin{tabular}{c|c|c|c|c|c}
\hline
\diagbox[width=3.0cm]{\raisebox{-2pt}{\hspace*{0.7cm}{\scriptsize Parameter}}}{\raisebox{-4pt}{\hspace*{2.5cm}{\scriptsize Event Type}}} & 
Faucet & Dishwasher & Toilet & Shower & Clothes Washer  \\
\hline
{\scriptsize $\lambda$}  & 42.0856 & 1.0784 & 12.9203 & 2.3668 & 2.1761  \\
\hline
\end{tabular}
\caption{
Statistically estimated parameters for Poisson distribution, where $\lambda$ was the expected value of daily frequency.}
\label{table:mle_poisson}
\end{table}

\subsection{Performance Evaluations over Synthetic Data}\label{subsec:scalable-syn}
Various sizes of synthetic data were used to evaluate the proposed models (BDSC-LP+SF and BDSC-LP), reporting both the whole-home level performance and their comparisons with the baselines.
All the methods were evaluated using the synthetic datasets varying from 50 to 1000 days.
For each data size, 10-fold cross validation is applied, and the mean$\pm$std of Avg. F-measure, Accuracy and NDE are shown in Table~\ref{table:syn_scal}.
The overall performance of our proposed models was better than that of the baselines for every data size.
\begin{table}[htb!] \footnotesize 
\centering
\begin{tabular}{c|ccccc}
\hline
\multirow{1}* {
\diagbox[width=2.2cm]{\raisebox{-0.5pt}{\hspace*{0.0cm}{ Methods}}}{\raisebox{-1pt}{\hspace*{-1.0cm}{ Days}}}}  
& \multirow{2}* {50} & 
  \multirow{2}* {100} & 
  \multirow{2}* {400} & 
  \multirow{2}* {800} & 
  \multirow{2}* {1000} \\
& & & & & \\
\hline

\multirow{3}*{{\textbf{BDSC-LP+SF}}}
& \textbf{0.3092}$\pm$0.0859 & \textbf{0.3479}$\pm$0.0643 & \textbf{0.5064}$\pm$0.0960 & \textbf{0.5297}$\pm$0.0826 & \textbf{0.5307}$\pm$0.0422 \\
& \textbf{0.6195}$\pm$0.0692 & 0.6624$\pm$0.0376 & \textbf{0.7594}$\pm$0.0674 & 0.7465$\pm$0.0614 & \textbf{0.7943}$\pm$0.0521 \\
& \textbf{0.8584}$\pm$0.0914 & \textbf{0.8060}$\pm$0.0744 & \textbf{0.7022}$\pm$0.0872 & 0.7133$\pm$0.0725 & 0.6932$\pm$0.0421 \\

\hline
\multirow{3}*{{\textbf{BDSC-LP}}}
& 0.2468$\pm$0.0683 & 0.2043$\pm$0.0486 & 0.3862$\pm$0.0752 & 0.4202$\pm$0.1027 & 0.4619$\pm$0.0355 \\
& 0.4776$\pm$0.0504 & 0.5646$\pm$0.0822 & 0.6844$\pm$0.0817 & 0.7014$\pm$0.0825 & 0.7496$\pm$0.0546 \\
& 0.9029$\pm$0.0655 & 0.8611$\pm$0.0432 & 0.7798$\pm$0.0379 & 0.7409$\pm$0.0923 & 0.7164$\pm$0.0926 \\

\hline
\multirow{3}*{{DDSC+SF}}
& 0.2747$\pm$0.0581 & 0.3235$\pm$0.0481 & 0.4389$\pm$0.0291 & 0.4704$\pm$0.0640 & 0.5167$\pm$0.0679 \\
& 0.6050$\pm$0.0512 & \textbf{0.6702}$\pm$0.0376 & 0.7467$\pm$0.0171 & \textbf{0.7551}$\pm$0.0115 & 0.7866$\pm$0.0115 \\
& 0.8702$\pm$0.0541 & 0.8209$\pm$0.0650 & 0.7593$\pm$0.0720 & 0.7289$\pm$0.0805 & 0.7091$\pm$0.0273 \\

\hline
\multirow{3}*{{DDSC+TCP+GL}}
& 0.1515$\pm$0.0895 & 0.2186$\pm$0.0711 & 0.3421$\pm$0.0844 & 0.3858$\pm$0.0865 & 0.4187$\pm$0.0652 \\
& 0.4804$\pm$0.0472 & 0.5856$\pm$0.0333 & 0.6916$\pm$0.0278 & 0.7218$\pm$0.0217 & 0.7542$\pm$0.0481 \\
& 0.9110$\pm$0.0850 & 0.8674$\pm$0.0730 & 0.7839$\pm$0.0401 & 0.7579$\pm$0.0401 & 0.7289$\pm$0.0617 \\

\hline
\multirow{3}*{{DDSC}}
& 0.1268$\pm$0.0931 & 0.2063$\pm$0.0264 & 0.2621$\pm$0.0275 & 0.3261$\pm$0.0753 & 0.3711$\pm$0.0357 \\
& 0.4408$\pm$0.0258 & 0.5523$\pm$0.0590 & 0.6544$\pm$0.0417 & 0.6739$\pm$0.0271 & 0.7154$\pm$0.0568 \\
& 0.9271$\pm$0.1080 & 0.8867$\pm$0.0692 & 0.7923$\pm$0.0579 & 0.7631$\pm$0.0401 & 0.7287$\pm$0.0097 \\

\hline
\multirow{3}*{{FHMM}}
& 0.2843$\pm$0.0791 & 0.3331$\pm$0.0694 & 0.4277$\pm$0.0313 & 0.4641$\pm$0.0544 & 0.4801$\pm$0.0484 \\
& 0.5804$\pm$0.0631 & 0.6324$\pm$0.0632 & 0.7289$\pm$0.0396 & 0.7313$\pm$0.0235 & 0.7619$\pm$0.0135 \\
& 0.8729$\pm$0.0737 & 0.8134$\pm$0.0564 & 0.7338$\pm$0.0589 & \textbf{0.7128}$\pm$0.0576 & 0.6784$\pm$0.0160 \\

\hline
\end{tabular}
\caption{
Scalable evaluations of disaggregation methods on \textbf{synthetic} data for time periods varying from 50 to 1000 days.
For each size of data, 10-fold cross validation was applied and the mean$\pm$std of Avg. F-measure, Accuracy, NDE are reported, where each metric occupies one line.
Bold entries denote the best performance values.
}
\label{table:syn_scal}
\end{table}
Generally, the performance of all the methods increased with the data size from 50 to 1000 days, and there is a relatively large performance gap from 100 to 400 days.
BDSC-LP+SF and DDSC+SF respectively outperformed BDSC-LP and DDSC with respect to all the three metrics, and these gains can be attributed to the customizations of basis functions with the help of shape features.
Compared with DDSC, BDSC-LP achieved a general better performance, verifying that the Bayesian treatment of the model provided the advantage of estimating device's consumption from aggregated data.
Through comparing DDSC+TCP+GL with DDSC, the TCP and GL extensions played a small part in performance enhancement.
It also showed that FHMM achieved similar results to DDSC+SF.

\subsection{Performance Evaluations over Real World Data} \label{subsec:large-real}
The performance was evaluated using a 10-fold cross validation for each home: the water use events were divided for each home randomly into 10 approximately equal-sized groups; the above methods were trained on the combined data from 9 of these groups and then tested on the group of data  withheld; this was repeated withholding each of the 10 groups in turn and the average performance over all homes was reported.
\begin{table}[htb!] \footnotesize
\centering
\begin{tabular}{c |ccccc}
\hline
\multirow{1}* {
\diagbox[width=2.2cm]{\raisebox{-0.5pt}{\hspace*{0.0cm}{ Methods}}}{\raisebox{-1.0pt}{\hspace*{-1.0cm}{ Devices}}}}  
& \multirow{2}* {\textbf{Faucet}} & 
  \multirow{2}* {\textbf{Dishwasher}} & 
  \multirow{2}* {\textbf{Toilet}} & 
  \multirow{2}* {\textbf{Shower}} & 
  \multirow{2}* {\textbf{Clothes Washer}} \\
& & & & & \\

\hline
\multirow{3}*{{\textbf{BDSC-LP+SF}}}
& 0.4460$\pm$0.1104 & 0.1556$\pm$0.0472 & 0.5138$\pm$0.1010 & 0.6664$\pm$0.1162 & 0.5924$\pm$0.0681 \\
& 0.3989$\pm$0.0961 & 0.4610$\pm$0.0820 & 0.5975$\pm$0.0950 & 0.5511$\pm$0.0903 & 0.5595$\pm$0.0815 \\
& \textbf{0.4173}$\pm$0.0917 & 0.2270$\pm$0.0486 & \textbf{0.5459}$\pm$0.0701 & \textbf{0.5922}$\pm$0.0065 & \textbf{0.5749}$\pm$0.0717 \\

\hline
\multirow{3}*{{\textbf{BDSC-LP}}}
& 0.2853$\pm$0.0608 & 0.1188$\pm$0.0227 & 0.4761$\pm$0.0639 & 0.3644$\pm$0.0926 & 0.3515$\pm$0.0709 \\
& 0.5039$\pm$0.0592 & 0.3369$\pm$0.1082 & 0.2417$\pm$0.0382 & 0.5963$\pm$0.0596 & 0.4762$\pm$0.0726 \\
& 0.3631$\pm$0.0621 & 0.1743$\pm$0.0369 & 0.3189$\pm$0.0406 & 0.4444$\pm$0.0648 & 0.4030$\pm$0.0665 \\

\hline
\multirow{3}*{{DDSC+SF}}
& 0.4236$\pm$0.0736 & 0.1555$\pm$0.0130 & 0.4958$\pm$0.1072 & 0.6851$\pm$0.0980 & 0.6230$\pm$0.0790 \\
& 0.3268$\pm$0.1788 & 0.4728$\pm$0.0914 & 0.6092$\pm$0.0934 & 0.5185$\pm$0.1345 & 0.4847$\pm$0.0728 \\
& 0.3347$\pm$0.1208 & 0.2324$\pm$0.0018 & 0.5363$\pm$0.0385 & 0.5756$\pm$0.0432 & 0.5451$\pm$0.0760 \\

\hline
\multirow{3}*{{DDSC+TCP+GL}}
& 0.2686$\pm$0.0567 & 0.1217$\pm$0.0440 & 0.4995$\pm$0.0787 & 0.3753$\pm$0.0776 & 0.3887$\pm$0.0674 \\
& 0.5472$\pm$0.0738 & 0.3064$\pm$0.0335 & 0.2317$\pm$0.0414 & 0.5472$\pm$0.0738 & 0.3682$\pm$0.0973 \\
& 0.3570$\pm$0.0559 & 0.1722$\pm$0.0475 & 0.3142$\pm$0.0421 & 0.4370$\pm$0.0269 & 0.3750$\pm$0.0778 \\

\hline
\multirow{3}*{{DDSC}}
& 0.2614$\pm$0.0582 & 0.1152$\pm$0.0522 & 0.4603$\pm$0.1303 & 0.3613$\pm$0.1213 & 0.3572$\pm$0.0940 \\
& 0.4572$\pm$0.1351 & 0.2764$\pm$0.0456 & 0.2095$\pm$0.0788 & 0.6313$\pm$0.0718 & 0.4394$\pm$0.0568 \\
& 0.3187$\pm$0.0057 & 0.1557$\pm$0.0451 & 0.2759$\pm$0.0778 & 0.4563$\pm$0.1153 & 0.3890$\pm$0.0633 \\

\hline
\multirow{3}*{{FHMM}}
& 0.3626$\pm$0.1008 & 0.1953$\pm$0.0768 & 0.4256$\pm$0.0181 & 0.4808$\pm$0.1486 & 0.5619$\pm$0.1157 \\
& 0.4720$\pm$0.0543 & 0.4865$\pm$0.0687 & 0.6646$\pm$0.0711 & 0.2828$\pm$0.0593 & 0.4002$\pm$0.1122 \\
& 0.4001$\pm$0.0551 & \textbf{0.2720}$\pm$0.0850 & 0.5185$\pm$0.0342 & 0.3547$\pm$0.0869 & 0.4663$\pm$0.1148 \\

\hline
\end{tabular}
\caption{
	Disaggregation results on the real data:
	10-fold cross-validation was applied for each home, and the mean$\pm$std of Precision, Recall, F-measure are reported, where each metric occupies one line.
	The bold entries denote the best F-measure.
}
\label{table:homo_real_device}
\end{table}
\begin{figure}[htb!]
\centering
\includegraphics[width=0.8\textwidth]{./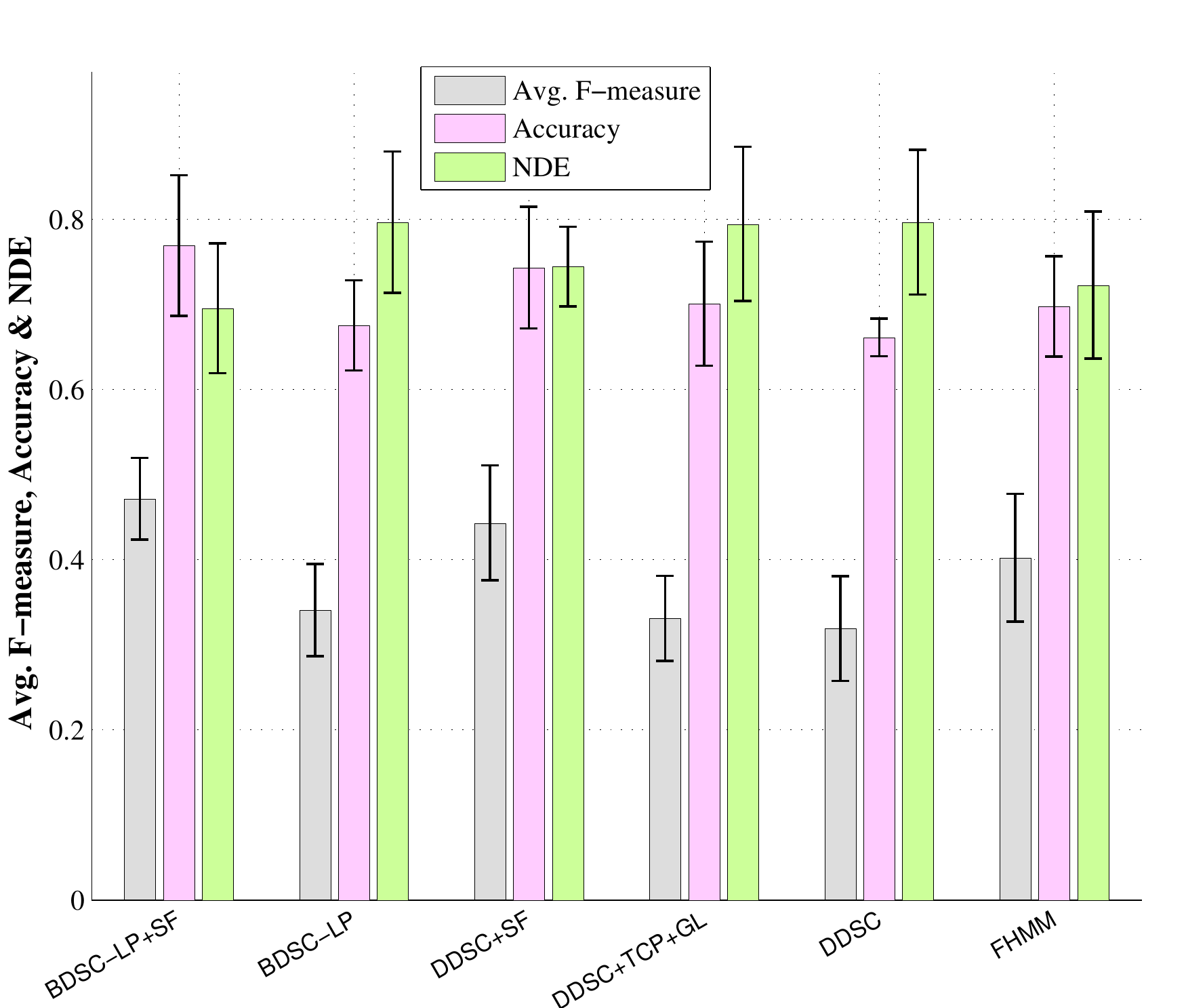}
\caption{Whole-home level performance evaluations on real data: 10-fold cross-validation was applied and the mean$\pm$std of Avg. F-measure, Accuracy and NDE are plotted as bars.}
\label{fig:homo_whole_performance}
\end{figure}
The device and whole home level evaluation results are shown in Table~\ref{table:homo_real_device} and Figure~\ref{fig:homo_whole_performance}, respectively.
BDSC-LP+SF provided better performance than BDSC-LP in terms of qualitative Precision, Recall and F-measure.
Meanwhile BDSC-LP+SF and DDSC+SF respectively outperformed BDSC-LP and DDSC.
These findings indicate that shape features were critical for improving performance.
The fact that BDSC-LP slightly outperformed DDSC showed that the Bayesian treatment of the sparse coding model was once again a better choice.
TCP and GL were of small significance in performance enhancement and the performance of DDSC+TCP+GL was a little better than that of DDSC.
FHMM could achieve acceptable results, which were similar to those produced DDSC+SF. 
At the whole home level, as expected, the values of Avg. F-measure, Accuracy and NDE achieved by BDSC-LP+SF were better than others.
Employing shape features enabled BSDC-LP+SF and DDSC+SF to outperform BSDC-LP and DDSC, respectively.
The Bayesian treatment of the sparse coding model allowed BDSC-LP to produce a slight better performance than BSC-L.
DDSC+TCP+GL also exhibited a slight better performance than DDSC, while FHMM had a similar performance to DDSC+SF.

\begin{figure}[t]
\centering
\includegraphics[width=0.85\textwidth]{./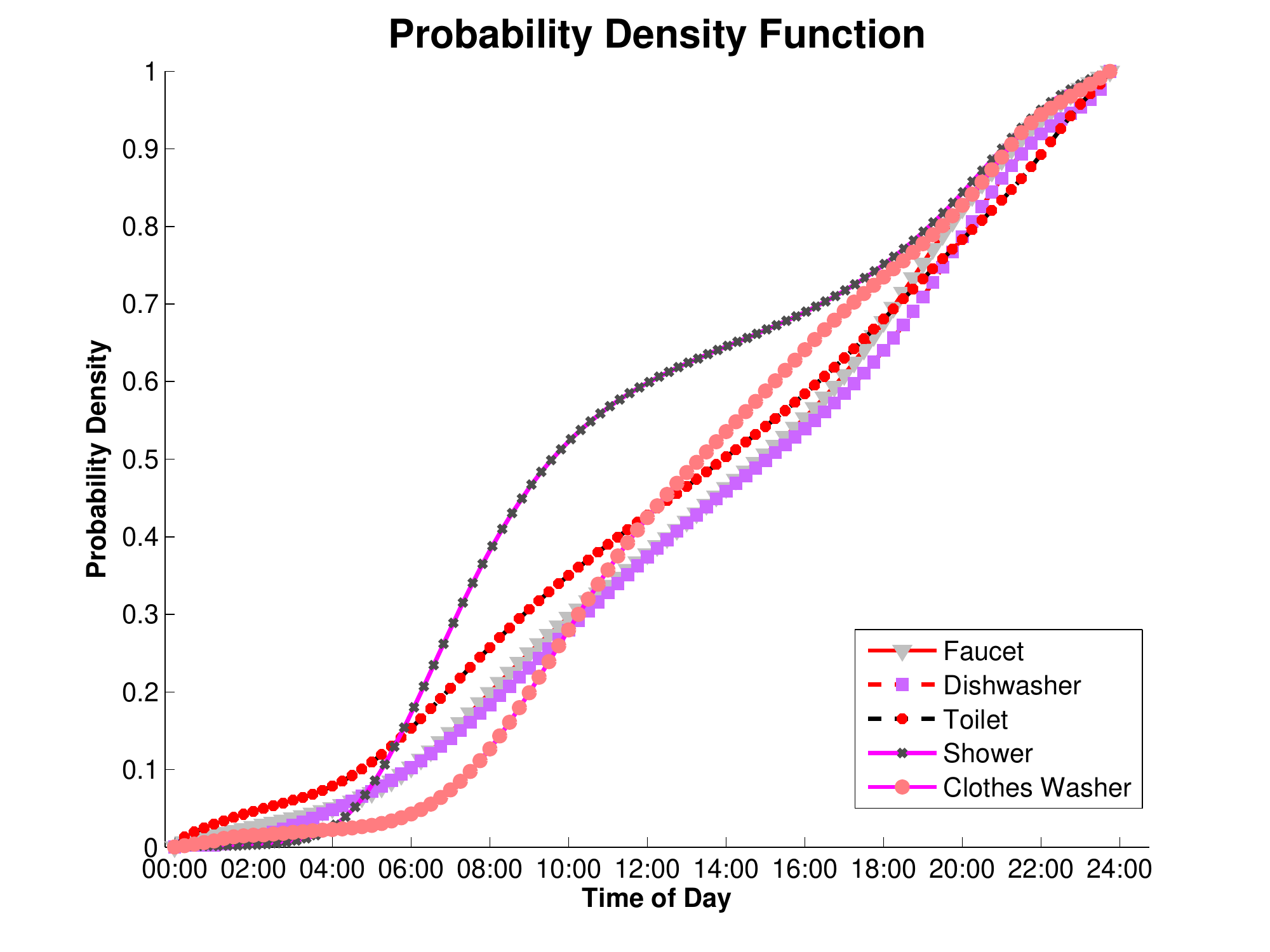}
\caption{CDFs of events' starting interval.}
\vspace{-0.0pc}
\label{fig:cdf_devices}
\end{figure}

\begin{figure}[t]
\centering
\includegraphics[width=0.85\textwidth]{./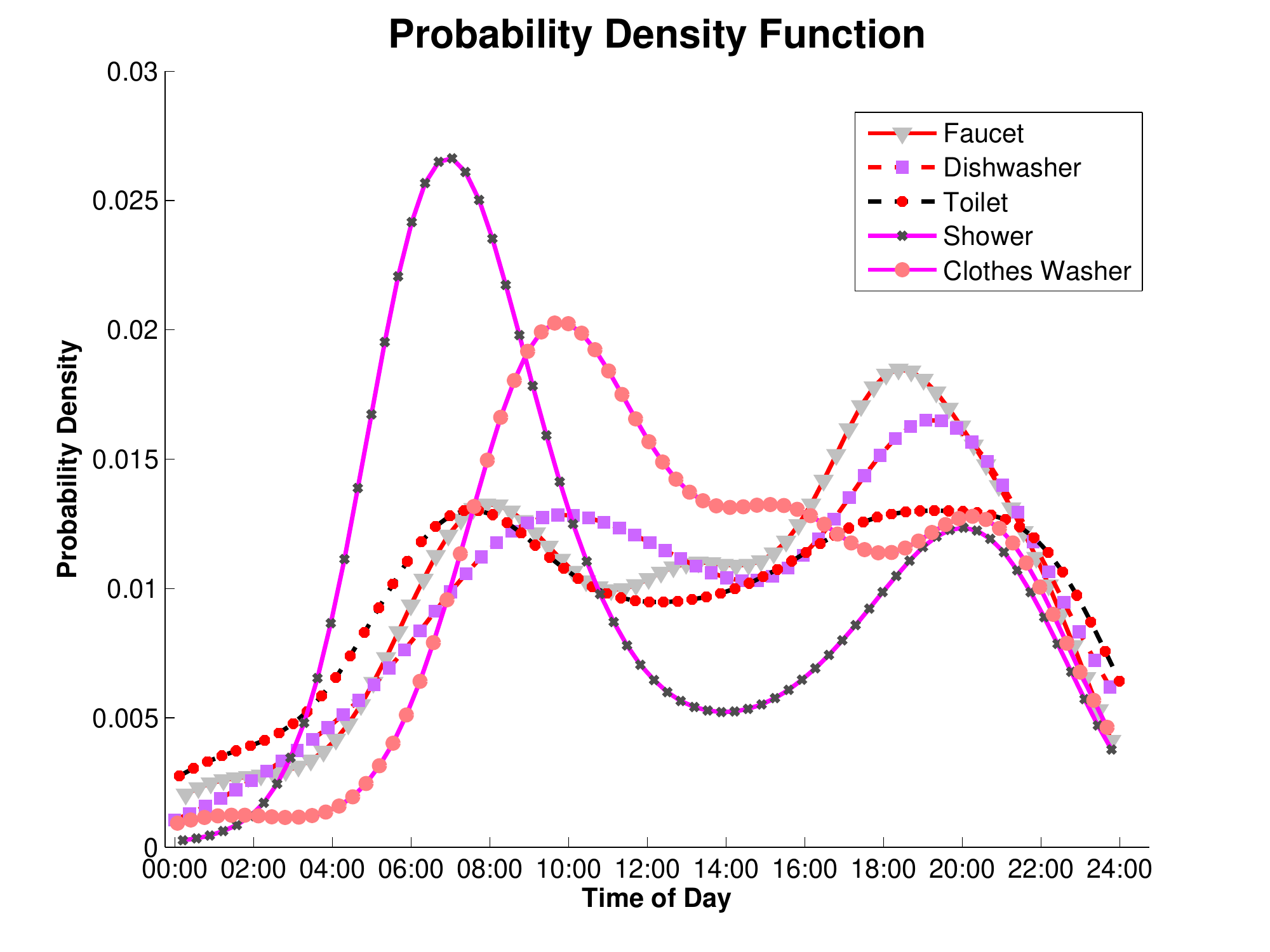}
\caption{PDFs of events' starting interval.}
\vspace{-0.0pc}
\label{fig:pdf_devices}
\end{figure}

\subsection{Discussion} \label{subsec:discussion}
The above experimental results demonstrate that the proposed models significantly outperformed the baselines for the task of water disaggregation.
The experimental results also verified the following observations.
\textit{1). \textbf{Utilization of domain knowledge.} }
The domain/prior knowledge suggests that the duration and consumption trend of water fixtures (corresponding to the human activities related to water consumption) are distinct across devices.
By formalizing and customizing the shape features to help learn discriminative dictionaries, BDSC-LP+SF and DDSC+SF respectively outperformed BDSC-LP and DDSC.
\textit{2). \textbf{Bayesian treatment of the discriminative sparse coding model.} }
The Bayesian treatment of the discriminative sparse coding model allows the model to be more flexible for the learning of dictionaries for enhancing the disaggregation performance.
The performance comparision results between BDSC-LP and DDSC in terms of Avg. F-measure, Accuracy and NDE validated that the Bayesian discriminative model could usually better results than the conventional discriminative model.

\section{Conclusions} \label{sec:conclusion}
This paper presents a shape features based Bayesian discriminative sparse coding model for low-sampling-rate water disaggregation.
Bayesian modeling of the discriminative sparse coding model can help for promoting the disaggregation performance.
Supported by an in-depth study of real-word consumption data, we propose the use of shape features to capture the changing characteristics of the data, followed by the application of basis smoothness to further increase new model's capacity to derive more sparse coefficients.
Gibbs sampling based methods are developed for model inference and parameter estimations.
Using both synthetic and real data sets, our experimental results showed that the proposed models significantly outperformed baselines at both the whole-home and device levels.

\appendix
\section{Additional Experimental Results}\label{app:C}


\subsection{CDFs and PDFs of Starting Intervals}\label{subsec:appC1}
Figure~\ref{fig:cdf_devices} shows the CDFs of starting interval for the five devices, while the corresponding PDFs (Probability Density Function) are shown in Figure~\ref{fig:pdf_devices}.

Checking the CDF and PDF of Faucet, we speculate that people use more Faucets before/after breakfast ($07:00\sim 09:00$) or before/after dinner ($17:30\sim 20:00$).
For Dishwasher, we observe that it happens more frequently at evening ($18:00\sim 20:00$), and indicate that people like to wash dishes after dinner.
With respect to Toilet, as expected, more Toilets happen before/after getting up ($07:00\sim 09:00$).
The patterns of Shower are the most distinctive: people take a Shower in morning ($06:00\sim 08:00$) or evening ($19:00\sim 21:00$).
Based on the observation of Clothes Washer, we find that morning (but not that obvious) is preferred by people for clothes washing.



\bibliographystyle{unsrt}
\bibliography{water}

%
%
%

%

%
%
%

%
%

\end{document}